\documentclass[runningheads]{llncs}

\usepackage[T1]{fontenc}
\usepackage[table,dvipsnames]{xcolor}
\usepackage{amssymb}
\usepackage{cite}
\usepackage{hyperref}
\usepackage{adjustbox}
\usepackage{graphicx}
\usepackage{printlen}
\usepackage{multirow}
\usepackage{booktabs}
\usepackage{url}
\usepackage{amsmath}
\usepackage{xspace}
\usepackage[table]{xcolor}
\usepackage{lineno}
\usepackage[makeroom]{cancel}
\usepackage{array}
\usepackage[capitalize]{cleveref}
\usepackage{orcidlink}
\usepackage{float}
\usepackage{algorithm}
\usepackage{algpseudocode}

\makeatletter
\DeclareRobustCommand\onedot{\futurelet\@let@token\@onedot}
\def\@onedot{\ifx\@let@token.\else.\null\fi\xspace}

\def\eg{\emph{e.g}\onedot}

\makeatother

\newcommand{\perc}[1]{\textcolor{green!50!black}{\scriptsize $\blacktriangle$#1}}

\newif\ifreview
\reviewfalse

\ifreview

	\linenumbers
\fi

\begin{document}

\def\SubNumber{87}
\def\GCPRTrack{Fast Review Track}

\title{Concept Guidance: Precise, Training-Free Latent Control for Text-to-Image Generation}

\ifreview
    \titlerunning{GCPR 2026 Submission \SubNumber{}. CONFIDENTIAL REVIEW COPY.}
    \authorrunning{GCPR 2026 Submission \SubNumber{}. CONFIDENTIAL REVIEW COPY.}
    \author{GCPR 2026 - \GCPRTrack{}}
    \institute{Paper ID \SubNumber}
\else
    \titlerunning{Concept Guidance}
    \author{
        Nikolai Röhrich\inst{1,2}%
        \thanks{Corresponding author: \texttt{n.roehrich@campus.lmu.de}}%
        \textsuperscript{$\dagger$}%
        \and
        Isabell Hans\inst{1,2}%
        \textsuperscript{$\dagger$}%
        \and
        Felix Krause\inst{3,4}%
        \textsuperscript{$\dagger$}%
        {\def\lastandname{\unskip\\}%
            \and
            \mbox{Björn~Ommer}\inst{3,4}%
        }
    }
    \authorrunning{N. Röhrich et al.}
    \institute{
        LMU Munich, Germany
        \and
        Konrad Zuse School of Excellence in Reliable AI (relAI), Germany
        \and
        CompVis @ LMU Munich, Germany
        \and
        Munich Center for Machine Learning (MCML), Germany
    }
\fi

\maketitle

\ifreview\else
    \begingroup
    \renewcommand{\thefootnote}{$\dagger$}
    \footnotetext[0]{Equal contribution.}
    \endgroup
\fi

\begin{abstract}
Text-to-image diffusion models have two major drawbacks that severely limit their practical utility: (1) standard models lack an intrinsic mechanism for continuous, concept-specific guidance (\eg, for precisely controlling how \textit{aesthetically pleasing} an image looks), and (2) they lack reliability for tasks requiring high local coherence (\eg, generating \textit{text} or \textit{human hands}). To tackle these issues, we introduce a novel notion of concept-wise mutual information and find large, concept-dependent differences between individual layers, demonstrating that the generation of specific structures is localized in distinct parts of the network. We exploit this insight by reinforcing the impact of concept-relevant layers in \emph{Concept Guidance (CoG)}, a precise, target-specific guidance method that works for models out-of-the-box without additional training, external models, gradients, or prompt engineering. CoG first quantifies each layer's concept-specific impact and then guides denoising using a weighted combination of predictions generated with concept-relevant layers skipped. We demonstrate performance increases across various targets and popular models like PixArt-$\alpha$, SD3, SD3.5, and FLUX.1-dev. Code is available at \url{https://github.com/CompVis/concept\_guidance}.
\keywords{Text-to-Image Generation \and Diffusion Models \and Guidance}
\end{abstract}

\vspace{-3mm}

\section{Introduction}
\label{sec:intro}

Text-to-image (T2I) diffusion models have achieved remarkable quality in generating images from natural language descriptions \cite{nichol2021glide, rombach2022high, chen2023pixart, esser2024scaling}. By sampling a random latent and iteratively refining it, these models gradually transform the latent into a coherent image that aligns with a given text prompt. Two fundamental challenges persist, particularly in one-shot generation. First, T2I models lack \textit{reliability} in tasks requiring precise local coherence. This is perhaps most apparent in outputs involving text, where models frequently produce misspellings or hallucinated characters \cite{chen2023textdiffuser}, and in images involving human hands, which often exhibit incorrect finger counts, distortions, or implausible geometry \cite{narasimhaswamy2024handiffuser}.

\begin{figure}[t]
    \centering
    \includegraphics[width=0.76\linewidth]{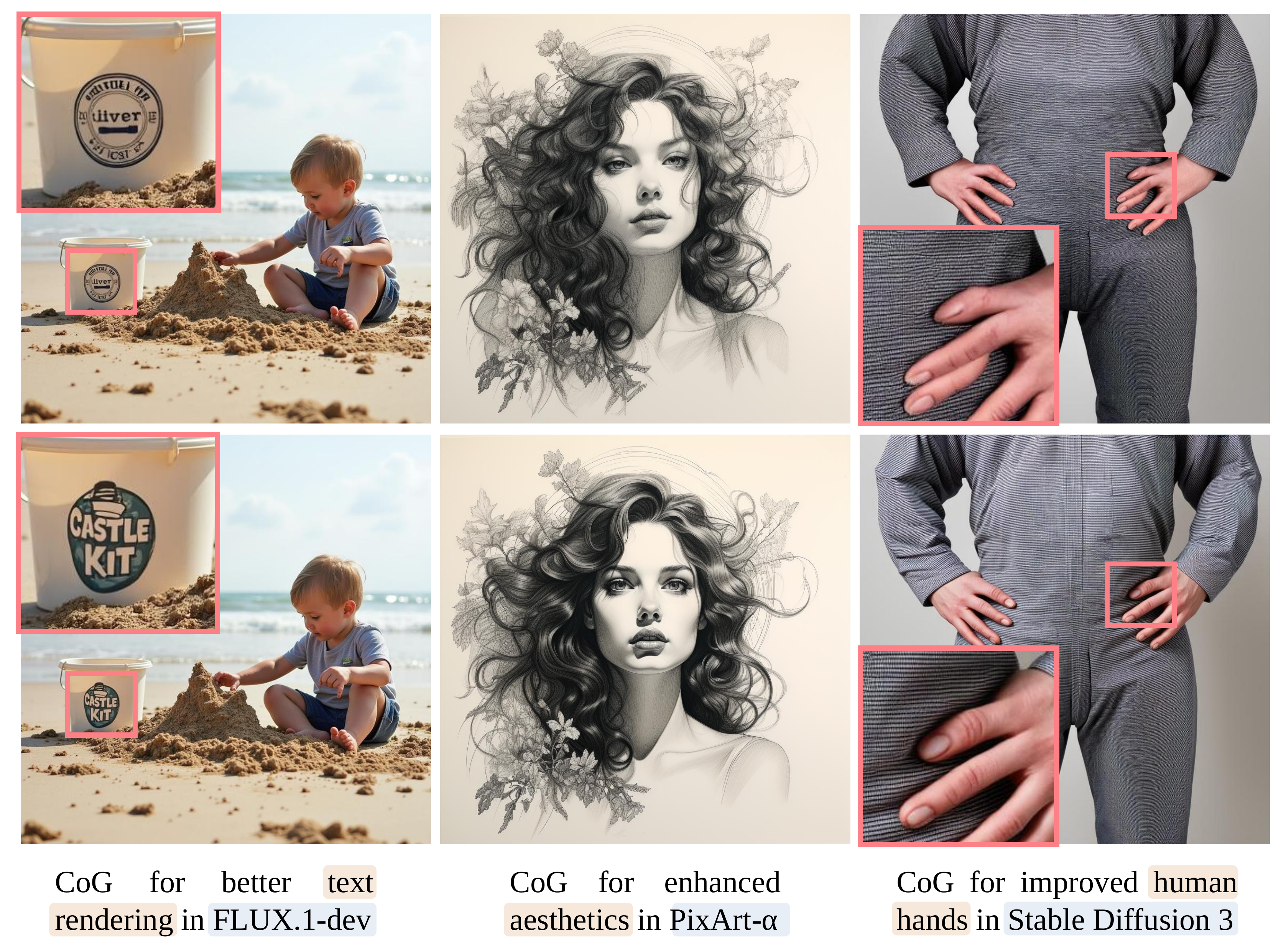}
    \caption{\textbf{Concept Guidance (CoG) improves arbitrary quantifiable concepts across a variety of T2I models.} CoG enables models to recover from classical failure modes while also delivering better generative quality for abstract concepts.}
    \label{fig:teaser}
\end{figure}

Second, Classifier-Free Guidance (CFG) \cite{ho2022classifier_free_guidance} -- the de facto standard guidance mechanism in T2I diffusion -- provides no \textit{fine-grained control} over the generation process. In CFG, generation is guided by performing a conditional and an unconditional forward pass and by extrapolating beyond the conditional noise prediction. This technique effectively controls global prompt alignment, but it entirely lacks the ability to control fine-grained semantic details and tends to underperform for complex prompts due to its global nature \cite{liu2022compositional, wu2023harnessing}.

Prominent attempts to address these limitations fall into three broad categories, each with significant drawbacks (complementary directions, such as refined guidance rules, are discussed in \Cref{sec:related_work}): (1) Fine-tuning methods like ControlNet \cite{zhang2023adding_control} or T2I-Adapters \cite{mou2024t2i_adapter} can impose spatial control but require extensive per-concept training and large datasets. (2) Gradient-based methods \cite{dhariwal2021diffusion_classifier_guidance, bansal2023universal_guidance} can guide generation towards a target but are computationally expensive. (3) Inference-time intervention, e.g. by manipulating cross-attention maps \cite{hertz2022prompt, kim2025text}, offers more flexibility, but often requires careful model-specific tuning.

Our method, \textit{Concept Guidance} (CoG), in contrast, is an out-of-the-box, concept-specific guidance mechanism for T2I models that requires no training, external models, gradients, or reverse-engineering of model internals. By introducing a novel notion of per-layer, per-concept mutual information, we demonstrate the varying degrees of influence that layers in T2I models have on the generation of specific semantic concepts (see \Cref{fig:mi_combined}), and we exploit this information by reinforcing the influence of concept-relevant layers (see \Cref{fig:manifold}).

We first propose a framework to measure the layer-wise target performance by using target-specific metrics.
Then, we generate predictions with relevant layers skipped and guide by extrapolating the standard prediction away from those predictions, using performance-based weighting. This selectively amplifies the most concept-relevant layers, allowing for precise guidance. Additionally, CoG seamlessly integrates with CFG, allowing users to effortlessly steer both global prompt alignment \emph{and} targeted semantics. We make the following contributions: 

\smallskip
\begin{enumerate}
    \item We propose a novel method of measuring the mutual information between diffusion network layers and target concepts, and thereby show that the properties of specific layers can be exploited for precise, per-concept guidance. We demonstrate that layers that work well for skip-based guidance are precisely the layers responsible for generating relevant concepts.
    \item In light of our layer analysis, we present \textit{Concept Guidance}, a concept-specific guidance method based on layer skipping. CoG is a simple yet effective method that enables precise, out-of-the-box guidance for T2I models and increases T2I generation performance for arbitrary (measurable) concepts.
    \item Through extensive experiments, we demonstrate that Concept Guidance consistently outperforms Classifier-Free Guidance and alternative guidance mechanisms based on layer-skipping. Across all tested models, we achieve an average single-target performance increase of $8.1\%$ compared to CFG. 
\end{enumerate}

\section{Related Work}
\label{sec:related_work}

\paragraph{Training-Free Guidance via Model Perturbations}
Diffusion sampling is commonly steered with \emph{classifier} guidance \cite{dhariwal2021diffusion_classifier_guidance} which leverages gradients from an external classifier to steer generation, or \emph{classifier-free} guidance (CFG) \cite{ho2022classifier_free_guidance}, which increases condition adherence by contrasting conditional and unconditional predictions.
Recent work revisits guidance rules to reduce CFG artifacts, \eg, via adaptive projection (APG) \cite{sadat2024eliminating} or manifold-constrained guidance (CFG++) \cite{chung2024cfgpp}, and to obtain CFG-like behavior \emph{without} special unconditional training
\cite{sadat2024no_training}.
Autoguidance \cite{karras2024guiding_autoguidance} instead contrasts the model with a weakened version of itself, \eg, an earlier training checkpoint, thus requiring access to training artifacts.
A complementary training-free line constructs a ``weak'' model at inference time by perturbing the generator itself, including attention-maps \cite{hong2023improving,hong2024smoothed,ahn2024self}, processed tokens \cite{rajabi2025token}, or self-guidance derived from the model's own dynamics \cite{li2024selfguidance}.
Layer skipping is another perturbation method widely exposed in modern diffusion pipelines, e.g., Stable Diffusion~3~\cite{huggingfaceSD3}, yet it remains underexplored as a controllable handle. Spatiotemporal Skip-Guidance (STG) uses a fixed single-layer skip to improve video quality~\cite{hyung2025spatiotemporal}.
Our work is closest in spirit to perturbation-based training-free guidance, but differs by making skip perturbations \emph{concept-dependent} and by using \emph{weighted combinations of multiple layers} rather than a fixed skip.

\paragraph{Concept-Specific Control Beyond Prompting}
Beyond prompt engineering, concept-specific control is often achieved by adding learnable components: low-rank adapters (LoRA) \cite{hu2022lora} enable parameter-efficient updates and support per-concept modules such as concept sliders \cite{gandikota2024concept}, while other approaches train auxiliary controllers (e.g., adapters) for new conditioning modalities \cite{mou2024t2i_adapter}.
Relatedly, some methods train lightweight predictors or readout heads on frozen diffusion features and backpropagate through them during sampling to enforce targets \cite{luo2023readout}.
A different family uses \emph{external} guidance objectives at inference time---either via gradients from arbitrary guidance functions \cite{bansal2023universal_guidance} or via energy-based losses built from off-the-shelf predictors \cite{yu2023freedom}, with newer formulations addressing proxy unreliability through trust-region style sampling \cite{huang2024trust}.
In contrast, our goal is a practical mechanism for concept-specific improvement within an off-the-shelf generator: we require no training and no external predictors during denoising.

\paragraph{Concept Localization and Interpretability}
A growing body of work probes \emph{where} and \emph{how} diffusion models represent text-conditioned concepts.
Attention-centric analyses and interventions treat cross-attention as the primary locus of word-region binding  \cite{hertz2022prompt,chefer2023attend}, and attribution methods such as DAAM derive token-to-pixel maps from cross-attention aggregation \cite{tang2022daam}.
Separately, feature-level probing reveals that semantic correspondences emerge in intermediate layers and vary strongly with depth \cite{tang2023emergent,luo2023diffusion}.
Information-theoretic perspectives quantify prompt--image dependence more robustly than raw attention, using MI-style decompositions \cite{kong2024interpretable,zawar2024diffusionpid} and applying information-theoretic objectives to improve alignment \cite{wang2025information}.
Mechanistic and causal approaches further localize attribute-relevant components for model editing \cite{basu2023localizing,basu2024mechanistic} or component attribution \cite{nguyen2024unveiling}, and recent work identifies ``vital'' layers in transformer backbones for training-free editing \cite{avrahami2024stableflow}.
Our work connects these threads by turning per-concept layer specialization into an \emph{actionable} control signal: we localize concept-relevant layers through targeted interventions and use the resulting layers to guide generation.

\section{Method}

\subsection{Preliminaries}

\paragraph{T2I Diffusion} Text-to-Image diffusion models \cite{nichol2021glide, rombach2022high, esser2024scaling} generate images aligned with a text prompt by iteratively denoising a randomly sampled latent into a noise-free image \cite{ho2020denoising_ddpm,song2020denoising}. This \textit{reverse process} is trained by adding noise to image samples from the target distribution during the \textit{forward process}. Formally, given an image $x_0$, noisy latents $x_t$ at timestep $t$ are given by

\begin{equation}
    q(x_t \mid x_0) = \mathcal{N}(x_t; \sqrt{\bar{\alpha}_t} x_0, (1-\bar{\alpha}_t)I), 
\end{equation}
where $\bar{\alpha}_t$ controls the noise level. Parameters $\theta$ are updated based on the distance between predicted noise $\epsilon_\theta(x_t, t, c)$ and actual noise $\epsilon$, thus solving 

\begin{equation}
\min_{\theta} \quad \mathbb{E}_{x_0,\epsilon,t,c} \|\epsilon - \epsilon_\theta(x_t, t, c)\|^2.
\end{equation}

\paragraph{Classifier Guidance} Classifier Guidance \cite{dhariwal2021diffusion_classifier_guidance} was introduced to steer unconditional diffusion models towards a distribution aligned with a desired mode $y$. Given a classifier $p_\phi(y \mid x_t)$, guidance is achieved by modifying the process to favor samples the classifier considers more likely. The distribution is adjusted as

\begin{equation}
    p_\theta(x_{t-1} \mid x_t, y) \propto p_\theta(x_{t-1} \mid x_t) \, p_\phi(y \mid x_t),
\end{equation}
which corresponds to adding a correction to the model’s score estimate based on the gradients of the external classifier. In practice, this results in

\begin{equation}
    \nabla \log p_\theta(x_t \mid y)
    = \nabla \log p_\theta(x_t)
    + \lambda \, \nabla \log p_\phi(y \mid x_t),
\end{equation}
where $\lambda$ is a scalar guidance strength controlling the influence of the classifier on the generation process. The classifier gradients $\nabla \log p_\phi(y \mid x_t)$ can be interpreted as the direction of strongest label-alignment in the latent space, $\vec{y}$.

\paragraph{Classifier-Free Guidance} Classifier-Free Guidance \cite{ho2022classifier_free_guidance} steers diffusion generation towards a target $y$ specified by a text prompt $c$, without requiring a classifier. Instead, the model is trained to perform both unconditional and text-conditioned generation. Then, guidance is achieved by extrapolating from the unconditional prediction $\epsilon_\theta(x_t, t, \emptyset)$ beyond the conditional prediction $\epsilon_\theta(x_t, t, c)$: 

\begin{equation}
    \tilde{\epsilon}_\theta(x_t, t, c) = (1-\lambda)\epsilon_\theta(x_t, t, \emptyset) + \lambda \epsilon_\theta(x_t, t, c), 
\label{eq:cfg}
\end{equation}
where $\lambda \geq 1$ controls the strength of conditioning. Similar to Classifier Guidance, this process can be interpreted as determining and reinforcing the direction of strongest condition alignment in the latent space $\vec{y}$.

\subsection{Concept Guidance}
\label{sec:msg}

\begin{figure}[t]
\centering
\begin{minipage}[c]{0.5\linewidth}
    \centering
    \includegraphics[width=\linewidth]{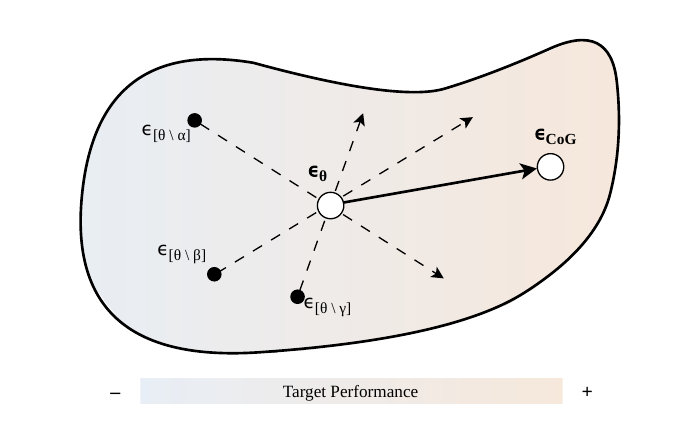}
\end{minipage}\hfill
\begin{minipage}[c]{0.44\linewidth}
    \caption{\textbf{Concept Guidance precisely approximates the target direction }by combining per-layer skip predictions using concept-relevant layers ($\alpha,\beta,\gamma$). It computes individual skip-layer noise predictions ($\epsilon_{[\theta\setminus \alpha]}, \epsilon_{[\theta\setminus \beta]}, \epsilon_{[\theta\setminus \gamma]}$) and extrapolates over them to precisely estimate the target direction.}
    \label{fig:manifold}
\end{minipage}
\end{figure}

Classifier Guidance is precise in guiding towards specific targets, while Classifier-Free Guidance is simple and effective for increasing text-to-image alignment. We introduce \textit{Concept Guidance} to combine the best of both worlds: highly usable, target-specific guidance in the latent space of diffusion models. We exploit the fact that given a target $y$, different layers of T2I models are particularly impactful w.r.t. $y$. To approximate the noise-space direction of highest target performance $\vec{y}$, we find such layers and precisely adjust their influence (see \Cref{fig:workflow}).

\paragraph{Mutual Information Analysis} We hypothesize to find consistent, concept-dependent patterns of distributed layer responsibilities in T2I models that could be exploited for concept-specific guidance. To validate our intuition and to provide insight into \textit{why} guiding with skipped layers works \cite{hyung2025spatiotemporal,huggingfaceSD3}, we analyze the mutual information (MI) between each layer and a given concept. We adapt the work of Wang et al. \cite{wang2025information}, where MI is used to locate layers that are responsible for general text-to-image alignment. Wang et al. \cite{wang2025information} formulate their notion as the expected difference between conditional and unconditional predictions:

\begin{equation}
    I(x,c) = \mathbb{E}_{t, \epsilon} \kappa_t \left\| \epsilon_\theta(x_t, t, c) - \epsilon_\theta(x_t, t, \emptyset) \right\|^2, 
    \label{eq:mi}
\end{equation}
where $\kappa_t$ scales the contribution of each timestep, reflecting information flow at that denoising stage. To extend this framework for our method, we introduce per-layer MI by computing \Cref{eq:mi} with a single skipped layer: 

\begin{equation}
    I(x,c,i) = \mathbb{E}_{t, \epsilon} \kappa_t \left\| \epsilon_{[\theta \setminus i]}(x_t, t, c) - \epsilon_{[\theta \setminus i]}(x_t, t, \emptyset) \right\|^2,
\end{equation}
where we denote the conditional noise prediction with layer $i$ skipped as $\epsilon_{[\theta \setminus i]}(x_t, t, c)$, or simply $\epsilon_{[\theta \setminus i]}$. We then define per-concept, per-layer MI as the difference in MI given a positive and a negative text prompt $c$ and $c \setminus y$ that differ only in the presence of the target $y$. That is, the mutual information between the target concept $y$ and images generated while skipping layer $i$ is given by

\begin{equation}
    I(x,y,i) = I(x,c,i) - I(x,c \setminus y,i).
    \label{eq:mi_concept}
\end{equation}

\begin{figure}[t]
    \centering
    \begin{minipage}[b]{0.45\linewidth}
        \centering
        \includegraphics[width=0.95\linewidth]{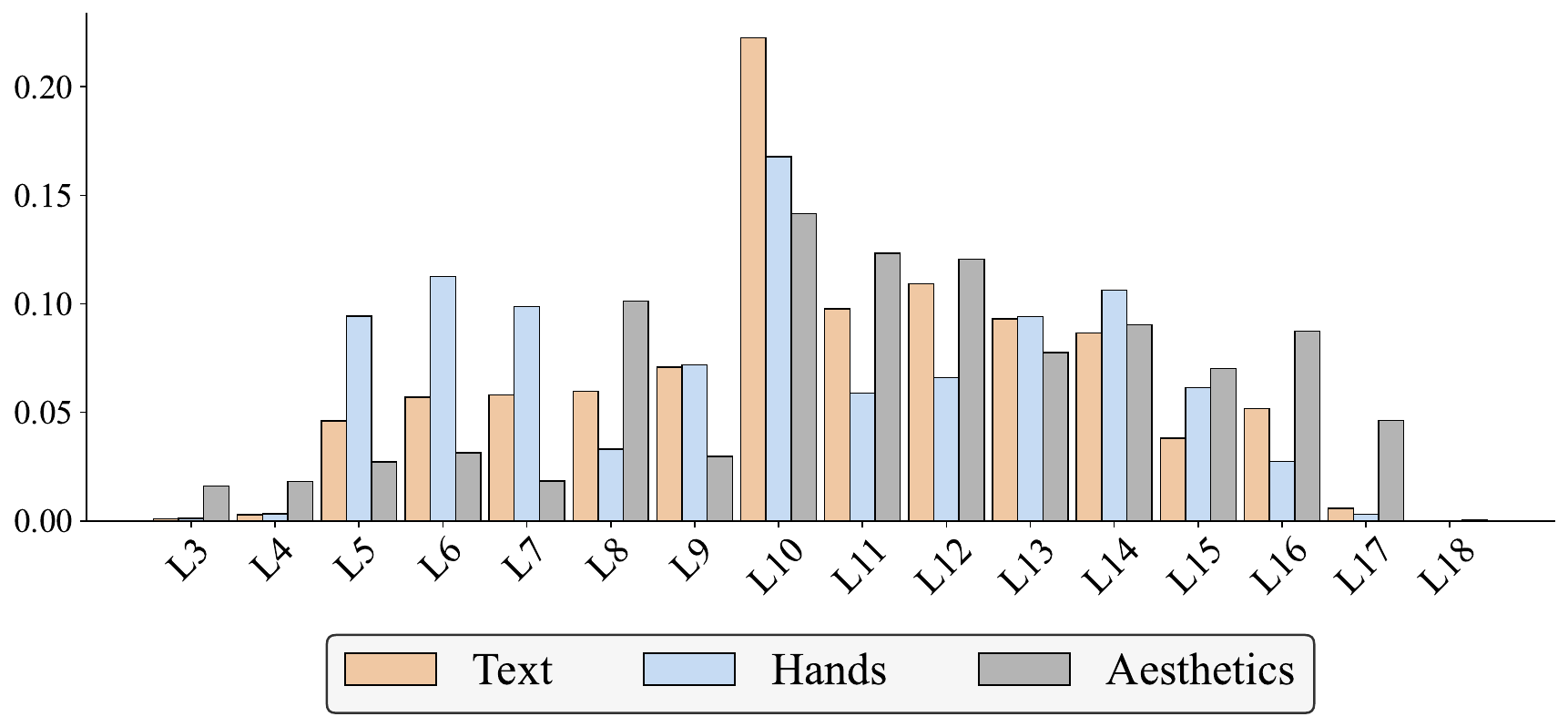}
        \par\smallskip (a) FLUX.1-dev
    \end{minipage}
    \hfill
    \begin{minipage}[b]{0.45\linewidth}
        \centering
        \includegraphics[width=0.95\linewidth]{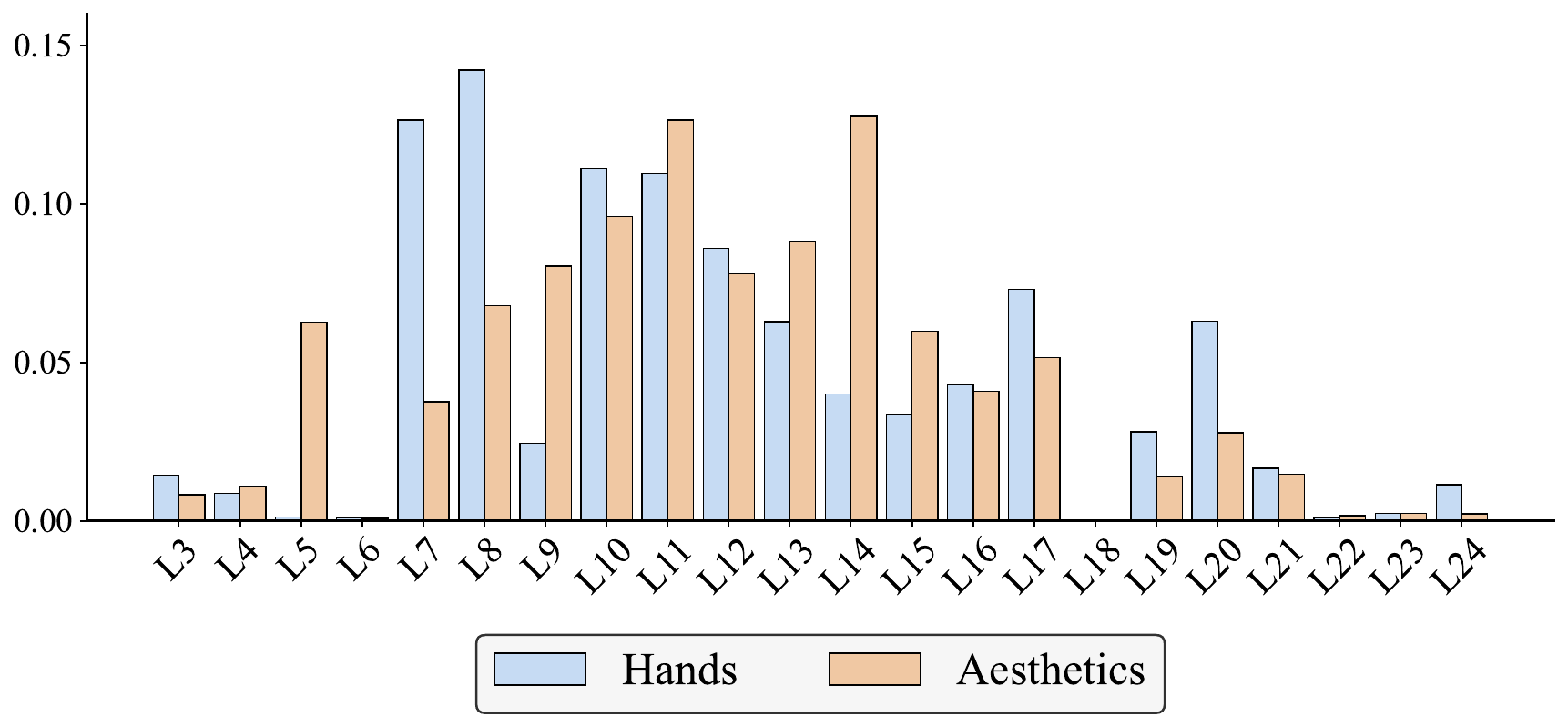}
        \par\smallskip (b) PixArt-$\alpha$
    \end{minipage}
    \caption{\textbf{Different concepts concentrate in different layers, enabling concept-specific layer selection for guidance.}
Bar plots report per-layer mutual information in (a) FLUX.1-dev and (b) PixArt-$\alpha$, revealing strong, concept-dependent variation.}
    \label{fig:mi_combined} % A new, combined label for the entire figure
\end{figure}
\paragraph{Layer-Skipping} We skip concept-relevant layers by redefining the residual mapping as an identity function for each skipped layer, similar to STG \cite{hyung2025spatiotemporal}: 

\begin{align}
\text{Res}(z_l) = f_l(z_l) + z_l, \quad \overline{\text{Res}}(z_l) = id(z_{l}) = z_l,
\label{eq:skip}
\end{align}
where $z_l$ denotes the feature representation at the $l$-th layer, and $f_l$ represents the nonlinear transformation in layer $l$. In $\overline{\text{Res}}$, the block output is set equal to its input, thus bypassing $f_l$. This preserves information flow while preventing additional perturbations, allowing for controlled modulation.

\paragraph{Locating Layer-Directions in Latent Space} The fundamental intuition behind our method is to decompose layer-wise predictions into a target direction $\vec{y}$ and a residual $\vec{r}$. Let $d_i$ be the noise prediction with layer $i$ amplified, then

\begin{equation}
    d_i = \alpha_i\vec{y} + \vec{r}_i,
\end{equation}
where \(\alpha_i\ge 0\), and \(r_i\) satisfies \(\langle r_i,y\rangle = 0\). Then, the idea is to extract information about the magnitude of $\alpha_i$ by profiling the effectiveness of single-layer skip-guidance. For each layer $i$, we generate a set of images by using the skipped prediction $\epsilon_{[\theta \setminus i]}$ as a negative direction to guide away from, similar to the unconditional noise prediction in \Cref{eq:cfg}. We measure the target performance $p_i$ of these resulting images. Given this performance observation, the per-layer performance can be represented as a sum of \(\alpha_i\) plus a noise component $\epsilon_i$:

\begin{equation}
    p_i = \alpha_i + \epsilon_i,
\end{equation}
where we assume that \(\{\epsilon_i\}_{i\ge1}\) are i.i.d. with \(\mathbb E[\epsilon_i]=0\), and that \(\epsilon_i\) is independent of \((\alpha_i,\vec{r}_i)\). Observing performances then yields a per-layer impact distribution for a given concept. The computational cost of this procedure scales linearly with the number of layers and the number of samples, i.e., $\mathcal{O}(L \cdot N)$ for $L$ layers and $N$ samples per layer. Notably, profiling is performed only once per model and concept. We make our layer analysis and code available at \url{https://github.com/CompVis/concept\_guidance}. Pseudocode  is provided in \Cref{alg:msg_profiling,alg:msg_inference}.

\begin{figure}[t]
\centering
\includegraphics[width=\textwidth]{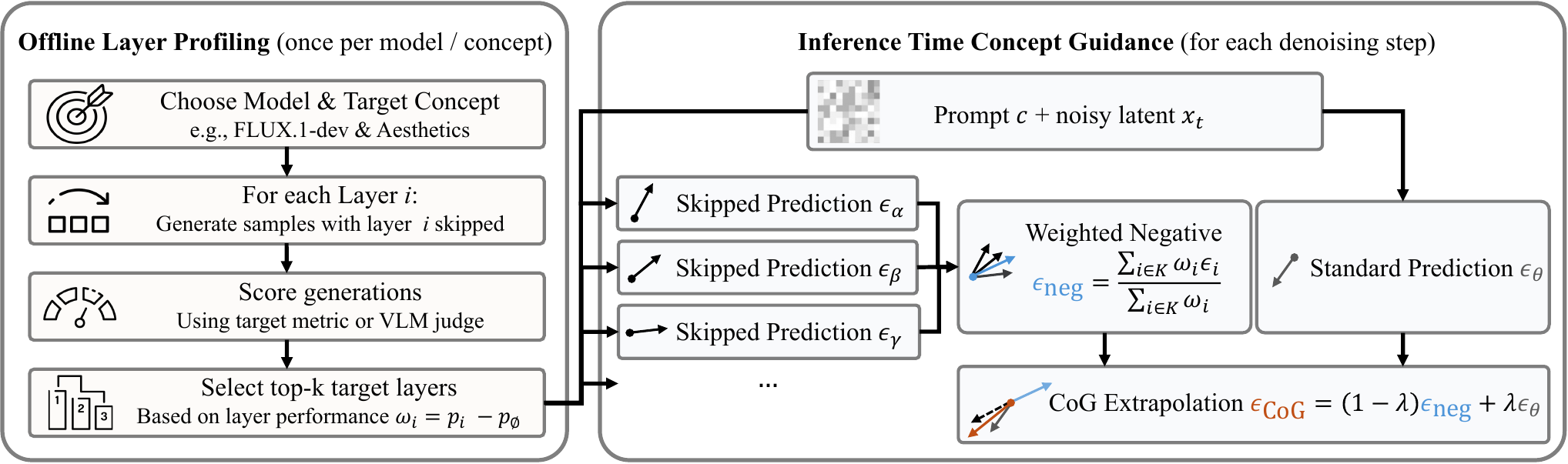}
\caption{\textbf{Concept Guidance Workflow.} \textit{Left:} In a one-time profiling stage, each layer is skipped individually, generations are scored, and top-$k$ layers are selected. \textit{Right:} At inference, per-layer skipped predictions are aggregated into a performance-weighted negative prediction; CoG then extrapolates from beyond the standard prediction.}
\label{fig:workflow}
\end{figure}

\paragraph{Performance-Weighted Multi-Layer Guidance} To approximate $\vec{y}$, we compute predictions for the top-$k$ best performing layers and weigh their impact on the overall prediction using their observed performance $p_i$. Let $K$ be the set of best-performing layers, then we compute an individual negative noise prediction with layer $i$ skipped for all $i \in K$. Each prediction is weighed by the performance term $p_i$ relative to the target performance without layer-skipping $p_\emptyset$. Specifically, the weight $\omega_i$ for the prediction with layer $i$ skipped is given by: 

\begin{equation}
    \omega_i = p_i - p_\emptyset.
\end{equation}

\begin{figure}[t]
\centering
\includegraphics[width=0.95\textwidth]{AESTHETICSNOTYPO.pdf}
\caption{\textbf{Optimizing aesthetics with CoG improves perceptual quality.} Side-by-side examples compare CFG vs. CoG.}
\label{fig:aesthetics}
\end{figure}

Thus, each layer contributes to the guidance process only insofar as it improves target performance compared to standard generation. The final negative noise prediction is then given by a weighted mean over all $k$ noise predictions: 

\begin{equation}
    \epsilon_{neg} = \frac{\sum_{i \in K} \omega_i \cdot \epsilon_{[\theta \setminus i]}}{\sum_{i \in K}\omega_i} .
\label{eq:msg}
\end{equation}
Finally, we extrapolate $\epsilon_{neg}$ beyond the standard noise prediction $\epsilon_\theta$: 

\begin{equation}
    \epsilon_{CoG} = (1-\lambda)\epsilon_{neg} + \lambda \epsilon_\theta, 
\label{eq:msg_guidance}
\end{equation}
 where $\epsilon_\theta$ is given by CFG according to \Cref{eq:cfg}, and $\lambda \geq 1$ controls the guidance strength. Thus, CoG integrates seamlessly with CFG and allows easily combining general prompt adherence with target-specific guidance.

\section{Experiments}
\label{sec:experiments}

\begin{figure*}[t]
\centering
\includegraphics[width=0.95\textwidth]{HANDSTEXTNOTYPO.pdf}
\caption{\textbf{CoG improves fine-grained local coherence for hard failure cases like visible text and hands.}
Side-by-side examples compare CFG vs. CoG.}
\label{fig:texthands}
\end{figure*}

\subsection{Setup}
We evaluate our method on multiple models with varying architecture, size, and output quality, such as \textbf{PixArt-$\boldsymbol{\alpha}$} \cite{chen2023pixart}, \textbf{Stable Diffusion 3} \cite{esser2024scaling, huggingfaceSD3}, \textbf{Stable Diffusion 3.5} \cite{huggingfaceSD35large} and \textbf{FLUX.1-dev} \cite{huggingfaceFLUX}. As targets we choose text and hand generation since text-to-image diffusion models notoriously struggle with those concepts. To provide evidence that CoG also extends to more abstract concepts we choose general aesthetics. We evaluate with EasyOCR \cite{vedhaviyassh2022comparative}, a pretrained model from MediaPipe \cite{lugaresi2019mediapipe} and a pretrained model for evaluating aesthetics \cite{schuhmann2022laion} based on CLIP embeddings \cite{radford2021learning}. Furthermore, we investigate if improving a target concept with CoG creates a trade-off for overall image quality using HPSv3 \cite{ma2025hpsv3}, which aligns strongly with human preferences. Throughout, all guidance methods are evaluated on identical prompts, seeds, and scheduler settings, so reported improvements correspond to paired comparisons. Beyond these metrics, we show that CoG also generalizes to concepts without a hand-crafted metric by using a vision-language model as the scoring judge (\Cref{sec:vlm}), and we release our per-model, per-concept layer configurations (\Cref{tab:combined_layers}).

\subsection{Layer Analysis}

We derive insights on layer-concept interactions from computing per-layer, per-concept MI according to \Cref{eq:mi_concept}. Evaluating MI across all tasks for FLUX.1-dev and PixArt-$\alpha$
(see \Cref{fig:mi_combined} (a) and (b)) reveals that (i) there exist large concept-dependent differences, which supports our claim that networks distribute concept-specific knowledge non-uniformly across layers and (ii) we find that layers with high per-concept MI are often located in the middle of the network.

To validate our findings, we evaluate correlations between per-layer MI and the measured layer performance on the target metric, and find very high correlations in the range of $[0.692, 0.956]$. This strongly reinforces our intuition that performance-weighted, multi-skip Concept Guidance works by extrapolating the influence of the layers that are most responsible for the generation of target concepts.

\subsection{Qualitative Results} CoG consistently improves visual quality across all concepts. We present comparisons of CFG and CoG in \Cref{fig:aesthetics} and \Cref{fig:texthands}. We further provide extensive uncurated comparisons of CoG and several baselines in \Cref{fig:text_uncurated,fig:aesth_uncurated}.

\begin{table}[t]
    \centering
    
    \begin{minipage}[t]{0.505\textwidth}
        \centering
        \small
        \caption{\textbf{Concept Guidance achieves better performance across measured tasks.} $^{\dagger}$ does not generate visible text.}
        \setlength{\tabcolsep}{4pt} % Reduced slightly to fit better in the minipage
        \adjustbox{max width=\linewidth}{
        \begin{tabular}{l l *{3}{c}}
        \toprule
        \textbf{Model} & \textbf{Method} & \textbf{Text} ($\uparrow$) & \textbf{Hands} ($\uparrow$) & \textbf{Aesthetics} ($\uparrow$) \\
        \midrule
        
        \multirow{3}{*}{PixArt-$\alpha$$^{\dagger}$}
        & CFG & - \hspace{3mm} & 0.357 & 6.667 \\
        & CoG$_{single}$ & - \hspace{3mm} & 0.496  & 6.709  \\
        & CoG$_{multi}$  & - \hspace{3mm} & \textbf{0.509}\rlap{ \perc{42\%}} & \textbf{6.722}\rlap{ \perc{0.8\%}} \\
        \midrule
        
        \multirow{3}{*}{SD3}
        & CFG & 0.476 \hspace{3mm} & 0.651 & 6.197 \\
        & CoG$_{single}$  & 0.483 \hspace{3mm} & 0.674  & 6.333  \\
        & CoG$_{multi}$ & \textbf{0.490}\rlap{ \perc{2.9\%}} \hspace{3mm} & \textbf{0.684}\rlap{ \perc{5.0\%}} & \textbf{6.348}\rlap{ \perc{2.4\%}} \\
        \midrule
        
        \multirow{3}{*}{SD3.5}
        & CFG & 0.580 \hspace{3mm} & 0.646 & 6.276 \\
        & CoG$_{single}$  & 0.598 \hspace{3mm} & 0.662 & \textbf{6.408}\rlap{ \perc{2.1\%}} \\
        & CoG$_{multi}$ & \textbf{0.610}\rlap{ \perc{5.2\%}} \hspace{3mm} & \textbf{0.679}\rlap{ \perc{5.1\%}} & 6.368 \\
        \midrule
        
        \multirow{3}{*}{FLUX.1-dev}
        & CFG & 0.481 \hspace{3mm} & 0.672 & 6.448 \\
        & CoG$_{single}$  & 0.499  \hspace{3mm} & 0.655  & 6.631 \\
        & CoG$_{multi}$ & \textbf{0.510}\rlap{ \perc{6.1\%}} \hspace{3mm} & \textbf{0.675}\rlap{ \perc{0.5\%}} & \textbf{6.657}\rlap{ \perc{3.2\%}} \\
        \bottomrule
        \end{tabular}
        }
        \label{tab:main_results}
    \end{minipage}\hfill % This adds flexible spacing between the two minipages
    \begin{minipage}[t]{0.475\textwidth}
        \centering
        \caption{\textbf{Concept Guidance can be applied to multiple concepts at once.} $^{\dagger}$ does not generate visible text.} 
        \adjustbox{max width=\linewidth}{
        \begin{tabular}{l l *{6}{c}} % 1 (Model) + 1 (Method) + 6 (Scores) = 8 cols
        \toprule
        & & \multicolumn{2}{c}{\textbf{\begin{tabular}{@{}c@{}}Text ($\uparrow$)  + \\ Hands ($\uparrow$) \end{tabular}}}
          & \multicolumn{2}{c}{\textbf{\begin{tabular}{@{}c@{}}Text ($\uparrow$) + \\ Aesthetics ($\uparrow$) \end{tabular}}}
          & \multicolumn{2}{c}{\textbf{\begin{tabular}{@{}c@{}}Hands ($\uparrow$) + \\ Aesthetics ($\uparrow$) \end{tabular}}} \\
        \cmidrule(lr){3-4} \cmidrule(lr){5-6} \cmidrule(lr){7-8}
        \textbf{Model} & \textbf{Method} & \textbf{Text} & \textbf{Hands} & \textbf{Text}& \textbf{Aes.} & \textbf{Hands} & \textbf{Aes.} \\
        
        \midrule
        \multirow{3}{*}{PixArt-$\alpha$$^{\dagger}$}
        & CFG & - & - & - & - & 0.330 & 6.217 \\
        & CoG$_{single}$ & - & - & - & - & 0.252 &  6.231\\
        & CoG$_{multi}$ & - & - & - & - & \textbf{0.334} &  \textbf{6.260}\\
        
        \midrule
        
        \multirow{3}{*}{SD3}
        & CFG &  0.353    &  0.423    &  0.416    &  6.213    &  0.631    &  5.922    \\
        & CoG$_{single}$ &  0.354    &  0.502    &  0.396    &  6.222    &  0.720    &  5.915    \\
        & CoG$_{multi}$  & \textbf{ 0.356   } & \textbf{ 0.503   } & \textbf{ 0.424   } & \textbf{ 6.282   } & \textbf{ 0.745   } & \textbf{ 5.938   } \\
        \midrule
        
        \multirow{3}{*}{SD3.5}
        & CFG &    0.498    &  0.620    &  0.637    &  \textbf{5.974}    & 0.330 & 6.217 \\
        & CoG$_{single}$ &  0.493    &  0.655    &  0.634    &  5.861    & 0.252 & 6.260 \\
        & CoG$_{multi}$ &  \textbf{0.508}    & \textbf{ 0.693   } & \textbf{ 0.642   } &  5.893   & \textbf{0.334} & \textbf{6.231} \\
        
        \midrule
        
        \multirow{3}{*}{FLUX.1-dev}
        & CFG &  0.346 & \textbf{0.554} & 0.377 & 6.606 & 0.668 & 6.205 \\
        & CoG$_{single}$~& 0.354 & 0.494 & 0.418 & 6.628 & 0.677 & 6.215 \\
        & CoG$_{multi}$ & \textbf{0.377} & 0.538 & \textbf{0.431} & \textbf{6.657} & \textbf{0.710} & \textbf{6.301} \\
        
        \bottomrule
        \end{tabular}
        }
        \label{tab:combined_results_multicolumn}
    \end{minipage}
\end{table}

\paragraph{Complex Compositional Tasks.} Concept Guidance improves tasks that require fine-grained structural coherence and correct object interactions. For \textit{human hands}, CoG visibly reduces common artifacts and produces more plausible results, particularly in complex scenarios where hands interact with other objects (\Cref{fig:texthands} \textit{d}). When generating \textit{visible text}, CoG consistently generates text more faithful to the prompt. In some cases, CoG even successfully renders complete and correct text where CFG fails to produce any readable output (\Cref{fig:texthands} \textit{b}).

\paragraph{General Concepts.} We find that Concept Guidance also improves concept-specific alignment beyond notorious failure cases of T2I models. When optimizing for the more general concept of \textit{aesthetics}, CoG yields both general and prompt-specific enhancements. Overall, we find that CoG produces images with improved lighting, contrast, and compositional symmetry (\Cref{fig:aesthetics} \textit{c}). Moreover, CoG adapts stylistic elements to the prompt. Human faces exhibit sharper features, and healthier skin tones (\Cref{fig:aesthetics} \textit{a}). Architectural scenes appear more modern, clean, and luxurious (\Cref{fig:aesthetics} \textit{b}). Lastly, natural landscapes display greater visual diversity, for instance, through a richer variety of vegetation (\Cref{fig:aesthetics} \textit{d}).

\subsection{Quantitative Results} We evaluate CoG with multiple performance-weighted layer skips (CoG$_{multi}$) using four models and three different target concepts, and compare our method against Classifier-Free Guidance \cite{ho2022classifier_free_guidance} and Concept Guidance with only a single, target-optimized skipped layer (CoG$_{single}$). Across all settings and models, CoG$_{multi}$ consistently outperforms both CFG and CoG$_{single}$ (see \Cref{tab:main_results}): Concept Guidance achieves an average performance increase of $8.1\%$ compared to CFG, and at its best, a $42\%$ increase for hand generation with PixArt-$\alpha$.

We also find consistent improvements across models. CoG achieves an average target performance increase of $3.3\%$ for FLUX.1-dev \cite{huggingfaceFLUX}, $3.5\%$ for SD3 \cite{huggingfaceSD3}, $3.9\%$ for SD3.5 \cite{huggingfaceSD35large}, and $22\%$ for PixArt-$\alpha$ \cite{chen2023pixart}. 
Regarding different tasks, CoG yields stable but moderate increases for aesthetics, and higher increases for tasks that require high local coherence. 
Specifically, CoG increases target performance by $2.0\%$ for aesthetics, by $4.7\%$ 
for text, and by $13.4\%$ for hands. We suspect localized tasks are especially sensitive to guidance accuracy. Our intuition is that CoG's improvements are caused by a better-aligned update direction.

\begin{figure}[t]
    \centering
    \includegraphics[width=0.7\textwidth]{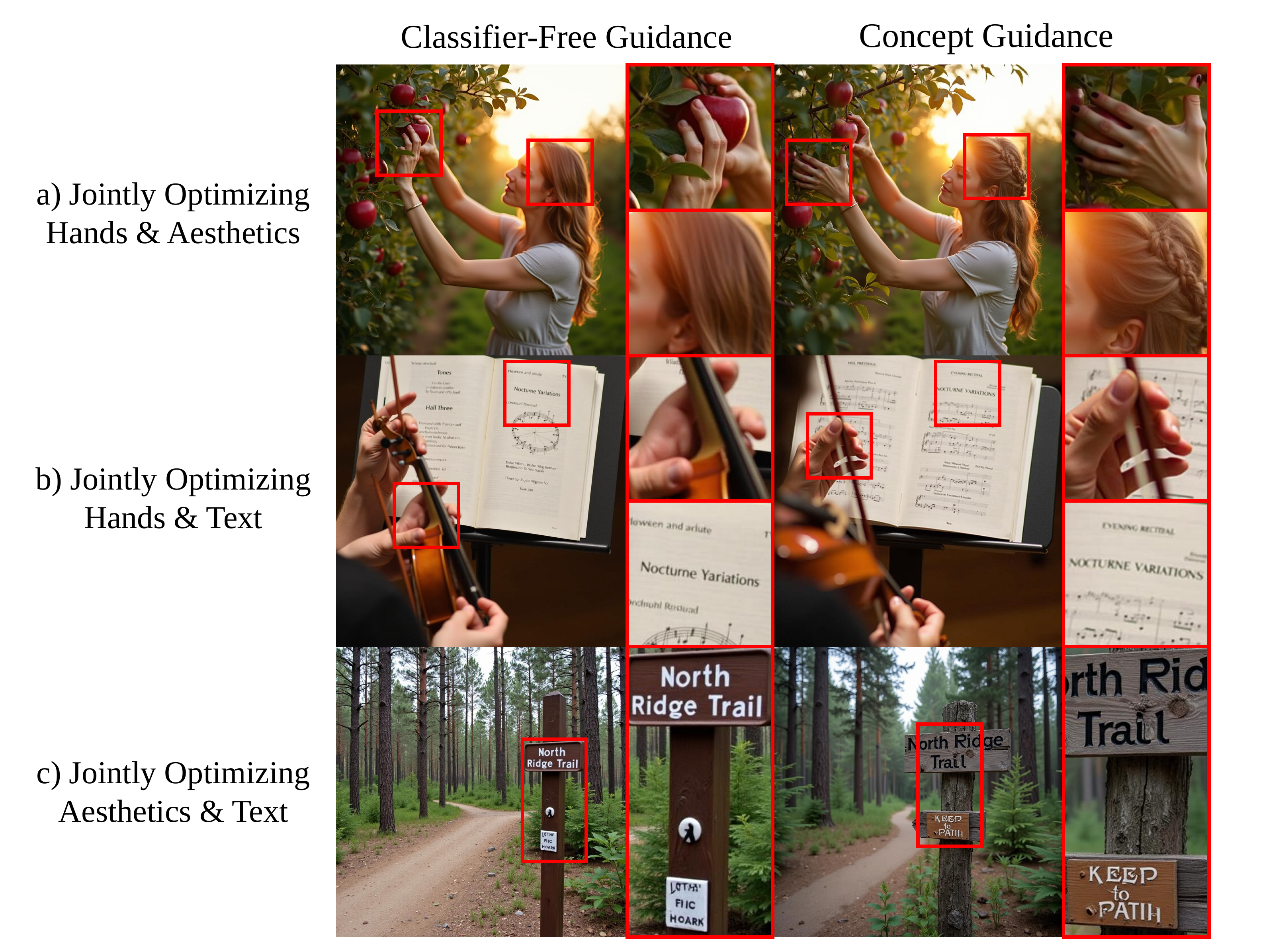}
    \caption{\textbf{Multi-target Concept Guidance} simultaneously improves multiple concepts on FLUX.1-dev compared to CFG.}
    \label{figure:combined}
\end{figure}

\paragraph{Combined Targets.} We then evaluate CoG on multiple targets at once. Here, we only use layers for skipping that increase performance for all targets, and choose those that yield the highest combined gain. We find that while synergies between targets vary, CoG again consistently outperforms CFG (see \cref{tab:combined_results_multicolumn}). Using only a single layer for guidance is often worse than CFG, indicating that weighted skipping of multiple layers is essential for optimizing several targets. We show qualitative results in \Cref{figure:combined}, where CoG jointly resolves common failure cases.

\begin{table}[b]
    \centering
    \setlength{\tabcolsep}{1.2pt}
    \renewcommand{\arraystretch}{.80}
    \caption{\textbf{Evaluation Metrics.} Comparison against APG \cite{sadat2024eliminating} and PAG \cite{ahn2024self}.}
    
    \begin{minipage}[t]{0.48\linewidth}
        \centering
        \textbf{(a) SD3 \& SD3.5}

        \adjustbox{max width=\linewidth}{%
        \begin{tabular}{@{}llccc@{}}
            \toprule
            \textbf{Model} & \textbf{Method} & \textbf{Aesth.} & \textbf{Hands} & \textbf{Text} \\
            \midrule
            \multirow{5}{*}{\makebox[0pt][l]{SD3}\hphantom{PixArt-$\alpha$}}
            & CFG & 6.358 & 0.664 & 0.477 \\
            & CFG+APG \cite{sadat2024eliminating} & 6.524 & 0.590 & 0.396 \\
            & PAG \cite{ahn2024self} & 6.354 & 0.687 & 0.481 \\
            \cmidrule(lr){2-5}
            & CoG & 6.494 & 0.707 & \textbf{0.491} \\
            & CoG+APG & \textbf{6.600} & \textbf{0.746} & 0.410 \\
            \midrule
            \multirow{5}{*}{\makebox[0pt][l]{SD3.5}\hphantom{PixArt-$\alpha$}}
            & CFG & 6.293 & 0.475 & 0.504 \\
            & CFG+APG \cite{sadat2024eliminating}  & 6.462 & 0.535 & 0.463 \\
            & PAG \cite{ahn2024self} & 6.200 & 0.352 & 0.491 \\
            \cmidrule(lr){2-5}
            & CoG & 6.344 & 0.618 & \textbf{0.514} \\
            & CoG+APG & \textbf{6.499} & \textbf{0.733} & 0.476 \\
            \bottomrule
        \end{tabular}}
    \end{minipage}\hfill
    \begin{minipage}[t]{0.48\linewidth}
        \centering
        \textbf{(b) PixArt \& FLUX}

        \adjustbox{max width=\linewidth}{%
        \begin{tabular}{@{}llccc@{}}
            \toprule
            \textbf{Model} & \textbf{Method} & \textbf{Aesth.} & \textbf{Hands} & \textbf{Text} \\
            \midrule
            \multirow{5}{*}{PixArt-$\alpha$}
            & CFG & 6.622 & 0.388 & -- \\
            & CFG+APG \cite{sadat2024eliminating} & 7.096 & 0.399 & -- \\
            & PAG \cite{ahn2024self} & 6.606 & 0.401 & -- \\
            \cmidrule(lr){2-5}
            & CoG & 6.658 & 0.437 & -- \\
            & CoG+APG & \textbf{7.143} & \textbf{0.499} & -- \\
            \midrule
            \multirow{5}{*}{FLUX.1-dev}
            & CFG & 6.585 & 0.727 & 0.608 \\
            & CFG+APG \cite{sadat2024eliminating} & 6.650 & 0.747 & 0.472 \\
            & PAG \cite{ahn2024self} & 6.652 & 0.626 & 0.629 \\
            \cmidrule(lr){2-5}
            & CoG & \textbf{6.659} & \textbf{0.754} & \textbf{0.650} \\
            & CoG+APG & 6.573 & 0.738 & 0.586 \\
            \bottomrule
        \end{tabular}}
    \end{minipage}
    \label{tab:main_metrics}
\end{table}

\begin{table}[t]
    \begin{minipage}[t]{0.55\textwidth}
        \centering
        \setlength{\tabcolsep}{1.2pt}
        \renewcommand{\arraystretch}{.80}
        
        \caption{\textbf{Auxiliary Metrics.}}

        \adjustbox{max width=\linewidth}{%
        \begin{tabular}{@{}llccc@{}}
            \toprule
            \textbf{Model} & \textbf{Method} & \textbf{KDD}$\downarrow$ & \textbf{CLIP}$\uparrow$ & \textbf{LPIPS}$\uparrow$ \\
            \midrule
            \multirow{4}{*}{PixArt-$\alpha$}
            & CFG & \textbf{1.589} & \textbf{31.05} & 0.649 \\
            & CoG$_{text}$ & -- & -- & -- \\
            & CoG$_{hands}$ & 2.127 & 30.81 & 0.584 \\
            & CoG$_{aesth.}$ & 1.679 & 30.98 & \textbf{0.655} \\
            \midrule
            \multirow{4}{*}{\makebox[0pt][l]{SD3}\hphantom{PixArt-$\alpha$}}
            & CFG & \textbf{0.869} & \textbf{31.97} & 0.635 \\
            & CoG$_{text}$ & 0.961 & 31.77 & 0.650 \\
            & CoG$_{hands}$ & 0.967 & 31.77 & 0.645 \\
            & CoG$_{aesth.}$ & 0.955 & 31.69 & \textbf{0.653} \\
            \bottomrule
        \end{tabular}}
        \label{tab:aux_metrics}
    \end{minipage}\hfill
    \begin{minipage}[t]{0.45\textwidth}
        \centering
        \caption{\textbf{Human Preference Scores favor CoG over CFG on SD3.5 \cite{huggingfaceSD35large}.} Especially when optimizing for aesthetics, CoG yields preferred images.}
        \adjustbox{max width=\linewidth}{%
        \begin{tabular}{l *3{w{c}{2.0cm}} w{c}{2.0cm}}
            \toprule
             & \textbf{Text} ($\uparrow$) & \textbf{Hands} ($\uparrow$)& \textbf{Aesthetics} ($\uparrow$) & \textbf{Mean} ($\uparrow$) \\
            \midrule
            CFG \cite{ho2022classifier_free_guidance} & \textbf{8.991} & 6.298 & 7.832 & 7.707 \\
            CoG$_{\mathbf{multi}}$                     & 8.945 & \textbf{6.552} & \textbf{8.277} & \textbf{7.925} \\
            \midrule
            Win Rate                                  & 48.4\% & 62.6\% & 74.6\% & 61.87\% \\
            \bottomrule
        \end{tabular}%
        }
        \label{tab:human_preference}
    \end{minipage}
\end{table}

\smallskip \noindent \textit{Stronger Baselines.} We further compare CoG against two state-of-the-art training-free guidance methods: Adaptive Projected Guidance (APG) \cite{sadat2024eliminating} and Perturbed-Attention Guidance (PAG) \cite{ahn2024self}. As APG and PAG improve \emph{general} guidance behavior while CoG contributes a \emph{concept-aware} guidance direction, the two are complementary and can be combined (CoG+APG). As shown in \Cref{tab:main_metrics}~(a, b), CoG improves over CFG in every setting, and either CoG or CoG+APG is best overall in every cell. Complementarity is clearest for aesthetics and hands, where CoG+APG wins, while CoG alone is better for text.

\smallskip \noindent \textit{Quality and Diversity.} We emphasize that CoG does not aim to improve \emph{general} generation quality, but rather provides systematic, concept-aware inference-time steering; we therefore report auxiliary metrics to characterize the trade-off between general and concept-specific quality. We use Kernel DINO Distance (KDD) instead of FID, as FID is poorly aligned with perceptual quality for state-of-the-art models \cite{stein2023exposing, yang2026representation}, as well as CLIP-Score and LPIPS (\Cref{tab:aux_metrics}). CLIP-Score remains close to CFG, LPIPS shows no systematic diversity collapse, and KDD reflects the expected fidelity/control trade-off.
Consistently, the HPSv3 scores reported below measure \emph{overall} generation quality while steering toward a concept, rather than concept quality itself.

\smallskip \noindent \textit{Human Preference Comparison.} We use HPSv3 \cite{ma2025hpsv3} as a preference-based proxy to probe perceptual trade-offs under target optimization and present absolute values and win-rates vs. CFG in \cref{tab:human_preference}. We explicitly do\emph{ not} optimize Concept Guidance for HPSv3: we keep the standard, per-target settings and evaluate $5{,}000$ samples per target on SD3.5 \cite{huggingfaceSD35large}. CoG is preferred for Hands (62.6\%) and Aesthetics (74.6\%), indicating that CoG is preferred by humans, especially when optimizing aesthetics. Text is near parity, indicating a minor trade-off.

\subsection{Ablation Studies}
\label{sec:ablations}

Here we summarize three ablations; the detailed discussion, figures (\Cref{fig:numlayers_ablation,fig:multiskip_ablation}) and tables are provided in \Cref{sec:ablation_appendix}. \textbf{(i)~Per-concept layers:} selecting skipped layers per concept, rather than using a single fixed layer as in STG \cite{hyung2025spatiotemporal}, improves target performance by up to $24.5\%$ and on average $5.6\%$ (\Cref{tab:stg_vs_cg}). \textbf{(ii)~Number of skipped layers:} performance improves with $k$ up to a sweet spot around $k=2$--$3$, with most of the gain already obtained from a single relevant layer, while larger $k$ eventually introduces interference between layer directions (\Cref{fig:numlayers_ablation}, \Cref{tab:num_layers_ablation}); the additional inference cost is therefore opt-in (\Cref{sec:cost}). \textbf{(iii)~Aggregation strategy:} performance-weighted aggregation of separate per-layer predictions outperforms both \emph{Naive} single-pass multi-skipping (as in the HuggingFace SD3 pipeline \cite{huggingfaceSD3}) and \emph{Uniform} weighting (\Cref{fig:multiskip_ablation}, \Cref{tab:guidance_methods}), suggesting that popular diffusion pipelines could be improved by incorporating our method. Finally, CoG is robust to the choice of guidance scale $\lambda$ (\Cref{tab:scale_ablation}).

\paragraph{Limitations.} Concept Guidance applies to concepts with a meaningful scoring signal: either an automatic metric or a VLM-based judge (\Cref{sec:vlm}). Layer profiling is performed once per model/concept pair and should not be assumed to transfer across backbones, so rankings must be recomputed for new models (we release our configurations in \Cref{tab:combined_layers} to avoid this cost for the models we study). At inference, CoG adds one noise prediction per skipped layer, increasing latency with $k$ (\Cref{sec:cost}). In practice, a single layer already yields most of the benefit. Finally, CoG steers a targeted concept rather than improving general generation quality resulting in a potential trade-off.

\section{Conclusion}
We introduced Concept Guidance, a simple, general, and effective mechanism for precise latent control in text-to-image diffusion models. By identifying and amplifying the influence of concept-specific layers, CoG solves persistent, well-known failure cases of standard guidance--like misspelled text and malformed hands -- but can also optimize for more general concepts like overall aesthetics. Concept Guidance's key strength is its usability: it is a plug-and-play component that integrates seamlessly with CFG, requires no training, gradients, or external models, and can be added to any existing pipeline with minimal modification.

\enlargethispage{2\baselineskip}
\begin{credits}
\subsubsection{\ackname}
This work has been supported by the Horizon Europe project ELLIOT (GA No. 101214398), the German Federal Ministry for Economic Affairs and Energy within the project “NXT GEN AI METHODS – Generative Methoden für Perzeption, Prädiktion und Planung”, the project “GeniusRobot” (01IS24083) funded by the Federal Ministry of Research, Technology and Space (BMFTR), and the BMWE ZIM-project (No. KK5785001LO4) “conIDitional LoRA”. The authors gratefully acknowledge the Gauss Center for Supercomputing for providing compute through the NIC on JUWELS/JUPITER at JSC and the HPC resources supplied by the NHR@FAU Erlangen. Furthermore, this work was partially supported by the DAAD programme Konrad Zuse Schools of Excellence in Artificial Intelligence, sponsored by the Federal Ministry of Research, Technology and Space.
\end{credits}

\clearpage
\bibliographystyle{splncs04}
\bibliography{CoG_arxiv}

\clearpage
\appendix

\begin{center}
   \Large \bfseries Supplementary Material \\
\end{center}

\setcounter{section}{0}
\renewcommand{\thesection}{\Alph{section}} 

\section{Layer Indices and Weights}
\label{sec:layer_indices}

To facilitate reproducibility and allow future work to apply Concept Guidance without the computational cost of layer profiling, we provide the exact configurations used in our experiments. For each model and target concept, we list the set of top-$k$ layer indices $\mathcal{K}$ identified as most impactful, along with their corresponding importance weights $\omega_i$. As defined in \Cref{sec:msg}, the weights represent the performance gain of the skipped layer relative to the baseline.

\begin{table}[h]
\centering
\setlength{\aboverulesep}{0pt}
\setlength{\belowrulesep}{0pt}
\renewcommand{\arraystretch}{1.25}
\caption{\textbf{Layer Configurations.} Top-$k$ skipped layers ($\mathcal{K}$) and their corresponding weights ($\Omega$) for all evaluated models and target concepts.}
\resizebox{0.7\columnwidth}{!}{%
\begin{tabular}{@{} l l c c c @{}}

\toprule
\textbf{Model} & \textbf{Task} & \textbf{$k$} & \textbf{Skipped Layers ($\mathcal{K}$)} & \textbf{Weights ($\Omega$)} \\
\midrule
\multirow{2}{*}{PixArt-$\alpha$} 
 & Human Hands & [4] & [3, 23, 8, 18] & [1.00, 0.88, 0.73, 0.63] \\
 & Aesthetics & [3] & [9, 20, 5] & [1.00, 0.28, 0.12] \\
\midrule
\multirow{3}{*}{SD3} 
 & Visible Text & [3] & [7, 11, 4] & [1.00, 0.55, 0.45] \\
 & Human Hands & [4] & [9, 6, 5, 10] & [1.00, 0.48, 0.45, 0.21] \\
 & Aesthetics & [3] & [6, 7, 8] & [1.00, 0.71, 0.70] \\
\midrule
\multirow{3}{*}{SD3.5} 
 & Visible Text & [2] & [15, 10] & [1.00, 0.49] \\
 & Human Hands & [2] & [7, 18] & [1.00, 0.98] \\
 & Aesthetics & [1] & [4] & [1.00] \\
\midrule
\multirow{3}{*}{FLUX.1-dev} 
 & Visible Text & [3] & [11, 3, 9] & [1.00, 0.85, 0.74] \\
 & Human Hands & [3] & [9, 12, 15] & [1.00, 0.81, 0.78] \\
 & Aesthetics & [3] & [15, 14, 13] & [1.00, 0.18, 0.09] \\
\bottomrule
\end{tabular}
}

\label{tab:combined_layers}
\end{table}

\section{Convergence Analysis}

Empirically, as demonstrated in \Cref{sec:ablations}, we find that CoG approximates the latent space direction of highest target performance $\vec{y}$ more precisely for a larger $k$, at least until a certain threshold. Here, we formalize our intuition and prove that under \emph{idealized} assumptions, CoG predictions converge to $\vec{y}$ as the number of skipped layers goes to infinity. In practice, the number of skippable layers is of course bounded by the model architecture.

\begin{lemma}
Let $d_i = \alpha_i \vec{y} + r_i$ be the noise prediction of layer $i$, where $\vec{y}$ is the unit ground-truth direction, $\alpha_i$ represents signal strength, and $r_i$ is a zero-mean residual vector. Let $\mathcal{K}$ be the set of $k$ layers selected by CoG.
The normalized CoG estimator
\[
    \widehat{y}_k = \frac{\sum_{i \in \mathcal{K}} d_i}{\left\| \sum_{i \in \mathcal{K}} d_i \right\|}
\]
converges to the true direction $\vec{y}$ as the number of selected layers $k \to \infty$, provided that the expected signal strength of selected layers is positive, i.e. $\mathbb{E}[\alpha_i \mid i \in \mathcal{K}] = \mu > 0$.
\end{lemma}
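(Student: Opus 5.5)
The plan is to divide numerator and denominator of $\widehat{y}_k$ by $k$ and apply a law of large numbers to each part. Write
\[
\frac{1}{k}\sum_{i\in\mathcal{K}} d_i \;=\; \Big(\frac{1}{k}\sum_{i\in\mathcal{K}}\alpha_i\Big)\vec{y} \;+\; \frac{1}{k}\sum_{i\in\mathcal{K}} r_i \;=:\; \bar{\alpha}_k\,\vec{y} + \bar{r}_k .
\]
Normalization is invariant under positive scaling, so $\widehat{y}_k = (\bar{\alpha}_k\vec{y}+\bar{r}_k)/\|\bar{\alpha}_k\vec{y}+\bar{r}_k\|$. It therefore suffices to show $\bar{\alpha}_k\to\mu$ and $\bar{r}_k\to 0$, in probability or almost surely.

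To make these limits available, I will state the idealized assumptions implicit in the lemma, in line with the modelling in \Cref{sec:msg}. The pairs $(\alpha_i,r_i)$ for selected layers are taken to be i.i.d. (or at least uncorrelated) draws with $\mathbb{E}[\alpha_i\mid i\in\mathcal{K}]=\mu$ and $\mathbb{E}[r_i]=0$, with finite second moments $\mathbb{E}[\alpha_i^2]<\infty$ and $\mathbb{E}\|r_i\|^2\le\sigma^2<\infty$. Under these assumptions:
\begin{itemize}
\item The strong law of large numbers gives $\bar{\alpha}_k\to\mu$ almost surely.
\item Applying the law coordinatewise in the finite-dimensional noise space gives $\bar{r}_k\to 0$ almost surely.
\item Alternatively, Chebyshev's inequality gives convergence in probability directly: $\mathbb{E}\|\bar{r}_k\|^2 \le \sigma^2/k$, provided the $r_i$ are uncorrelated.
\end{itemize}
Consequently $\frac1k\sum_{i\in\mathcal{K}} d_i\to\mu\vec{y}$. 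The map $v\mapsto v/\|v\|$ is continuous away from $0$, and the limit $\mu\vec{y}$ has norm $\mu>0$ because $\vec{y}$ is a unit vector. The continuous mapping theorem then yields $\widehat{y}_k\to \mu\vec{y}/\mu=\vec{y}$.

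For a quantitative version, I will use the orthogonality $\langle r_i,\vec{y}\rangle=0$ from \Cref{sec:msg}. It gives
\[
\cos\angle(\widehat{y}_k,\vec{y}) = \frac{\bar{\alpha}_k}{\sqrt{\bar{\alpha}_k^2+\|\bar{r}_k\|^2}},
\]
and hence a rate $1-\cos\angle = O(1/k)$ in expectation once $\bar{\alpha}_k$ is bounded away from zero.

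The main obstacle is justifying the independence and averaging assumptions, not the limit argument itself. $\mathcal{K}$ is chosen by ranking the observed performances $p_i=\alpha_i+\epsilon_i$, so selected layers are not a simple random sample. I will handle this by conditioning on the selection event, which is exactly how the lemma phrases $\mu$ as a conditional mean. I will assume that, conditional on selection, the $r_i$ remain zero-mean and independent. This is plausible because $\epsilon_i$ is independent of $(\alpha_i,r_i)$ and the selection depends on $r_i$ only through $p_i$, which does not involve $r_i$.

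Two further gaps need remarks:
\begin{itemize}
\item The lemma's estimator uses the unweighted sum, whereas \Cref{eq:msg} uses weights $\omega_i$. For bounded positive weights, a weighted law of large numbers, for example Kolmogorov's criterion with $\sum_i \omega_i^2/(\sum_{j\le i}\omega_j)^2<\infty$, gives the same conclusion.
\item $k\to\infty$ is purely idealized, since the number of skippable layers is bounded by the architecture, and I will say so explicitly.
\end{itemize}
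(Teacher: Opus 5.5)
Your proposal is correct and is essentially the paper's proof: you rescale by $1/k$, apply the law of large numbers separately to the signal coefficient (limit $\mu$) and the residual mean (limit $\mathbf{0}$), and pass through the normalization using $\|\mu\vec{y}\|=\mu>0$, with your explicit i.i.d.\ and second-moment assumptions and continuous-mapping step making rigorous what the paper leaves tacit. One minor caveat on your side remark: selecting on $p_i=\alpha_i+\epsilon_i$ conditions on $\alpha_i$, and top-$k$ ranking couples the selected indices, so the $r_i$ stay centered only under an extra hypothesis such as $\mathbb{E}[r_i\mid\alpha_i]=\mathbf{0}$; keep that step as an explicit assumption rather than arguing it is automatic.
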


\begin{proof}
Let $s_k = \frac{1}{k} \sum_{i \in \mathcal{K}} d_i$ be the mean vector of the selected layers. Substituting the decomposition $d_i = \alpha_i \vec{y} + r_i$ into the sum, we have:
\[
    s_k = \left( \frac{1}{k} \sum_{i \in \mathcal{K}} \alpha_i \right) \vec{y} + \left( \frac{1}{k} \sum_{i \in \mathcal{K}} r_i \right).
\]
We analyze the asymptotic behavior of the two terms on the right-hand side as $k \to \infty$ by applying the Law of Large Numbers (LLN):

\begin{enumerate}
    \item \textbf{Signal Term:} The coefficient of $\vec{y}$ is the sample mean of the signal strengths $\alpha_i$. By the LLN, this converges to the conditional expectation of $\alpha$:
    \[
        \frac{1}{k} \sum_{i \in \mathcal{K}} \alpha_i \xrightarrow{k \rightarrow \infty} \mu.
    \]
    \item \textbf{Residual Term:} The second term is the sample mean of the independent residual vectors $r_i$. Since $\mathbb{E}[r_i] = \mathbf{0}$, the LLN dictates:
    \[
        \frac{1}{k} \sum_{i \in \mathcal{K}} r_i \xrightarrow{k \rightarrow \infty} \mathbf{0}.
    \]
\end{enumerate}

Combining these results, the unnormalized estimator converges to the scaled ground truth:
\[
    s_k \xrightarrow{k \rightarrow \infty} \mu \vec{y}.
\]
Since $\mu > 0$, the magnitude $\|s_k\|$ converges to $\mu \|\vec{y}\| = \mu$. Finally, the normalized estimator $\widehat{y}_k$ satisfies:
\[
    \widehat{y}_k = \frac{s_k}{\|s_k\|} \xrightarrow{k \rightarrow \infty} \frac{\mu \vec{y}}{\mu} = \vec{y}.
\]
\end{proof}

\section{Optimizing Arbitrary Concepts with VLMs.}
\label{sec:vlm}
While we are bound to concepts for which valid metrics exist when reporting results in the experiment section, we find that using vision-language models (VLMs) as judges for layer-finding makes Concept Guidance available for arbitrary concepts, even when no valid metrics exist. We demonstrate the broad applicability of Concept Guidance by optimizing for the following additional targets (\Cref{fig:vlm_targets}):

\begin{itemize}
    \item \textbf{Symmetry and Geometric Regularity:} We optimize for images whose main subject displays strong structural symmetry and an overall regular geometric arrangement. This property is relevant across portraits, architecture, and design, yet it is difficult to measure.

    \item \textbf{Background Separation:} We target images where the subject is clearly separated and visually dominant. Subject-background separation is a key component in product photography, close-up shots, and portraits.

    \item \textbf{Ukiyo-e Style:} We optimize for images that match the visual characteristics of traditional Japanese ukiyo-e woodblock prints. This shows that Concept Guidance can target highly specific artistic styles.
\end{itemize}

\begin{figure}[H]
    \centering
    \includegraphics[width=\linewidth]{vlm_targets-2.pdf}
    \caption{Comparison of Classifier-Free Guidance \cite{ho2022classifier_free_guidance} and Concept Guidance for additional targets where layers are profiled using a VLM (\texttt{InternVL3-14B}).}    
    \label{fig:vlm_targets}
\end{figure}

\section{Additional Ablation Results}
\label{sec:ablation_appendix}
This section provides the detailed ablation studies summarized in \Cref{sec:ablations}.

\paragraph{Leveraging Concept-Wise Layers} To measure the effect of leveraging different layers for guidance per concept, we compare CoG against STG \cite{hyung2025spatiotemporal}, where a fixed single layer is skipped based on overall generation quality, regardless of the given concept. While STG is a method for video generation guidance, we port the approach to text-to-image generation by choosing the best overall layer based on FID scores. To demonstrate the effect achieved by per-concept information alone, we do not leverage performance-weighted multi-skipping, i.e.\ we use CoG$_{single}$ only. We find that incorporating per-layer, per-concept knowledge when choosing the skipped layer yields significant performance increases of up to $24.5\%$, and an average increase of $5.6\%$ (\Cref{tab:stg_vs_cg}).

\begin{table}[t]
\centering
\caption{\textbf{Leveraging per-concept layer information yields significant gains} over guiding with a fixed layer as is the case in Spatio-Temporal Skip-Guidance.}
\setlength{\tabcolsep}{10pt} % Adjust this value to make columns wider or narrower
\begin{tabular}{l l *{3}{c}}
\toprule
\textbf{Model} & \textbf{Method} & \textbf{Text} (↑) & \textbf{Hands} (↑) & \textbf{Aesthetics} (↑) \\
\midrule

\multirow{2}{*}{PixArt-$\alpha$}
& STG \cite{hyung2025spatiotemporal}& - & 0.384 & 6.502 \\
& CoG$_{single}$ & - & \textbf{0.478}\rlap{ \perc{24.48\%}} & \textbf{6.606}\rlap{ \perc{1.6\%}} \\
\midrule

\multirow{2}{*}{SD3}
& STG \cite{hyung2025spatiotemporal}& 0.441 & 0.662 & 6.359 \\
& CoG$_{single}$ & \textbf{0.462}\rlap{ \perc{4.8\%}} & \textbf{0.664}\rlap{ \perc{0.3\%}} & \textbf{6.359}\rlap{ \perc{0.0\%}} \\
\midrule

\multirow{2}{*}{SD3.5}
& STG \cite{hyung2025spatiotemporal}& 0.531 & 0.540 & 6.297 \\
& CoG$_{single}$ & \textbf{0.564}\rlap{ \perc{6.2\%}} & \textbf{0.603}\rlap{ \perc{11.7\%}} & \textbf{6.344}\rlap{ \perc{0.7\%}} \\
\midrule

\multirow{2}{*}{FLUX.1-dev}
& STG \cite{hyung2025spatiotemporal} & 0.490 & 0.595 & 6.724 \\
& CoG$_{single}$ & \textbf{0.500}\rlap{ \perc{2.0\%}} & \textbf{0.643}\rlap{ \perc{8.1\%}} & \textbf{6.865}\rlap{ \perc{2.1\%}} \\
\bottomrule
\end{tabular}

\label{tab:stg_vs_cg}
\end{table}

\paragraph{Number of Skipped Layers} The idea behind CoG is to approximate the direction of highest target performance $\vec{y}$ more accurately for a larger $k$, which we also demonstrate theoretically in our convergence analysis. However, we also expect that in practice, there is a tradeoff between increased accuracy and negative synergies between multiple noise predictions for a larger $k$. In particular, we find that while multiple layers can each be beneficial, their resulting predictions may diverge. Thus, skipping too many layers with CoG can result in guiding with possibly conflicting noise predictions. To find the optimal tradeoff, we evaluate models with up to 5 skipped layers, and measure their impact on task-specific metrics. As shown in \Cref{fig:numlayers_ablation} and \Cref{tab:num_layers_ablation}, performance indeed generally improves with more skipped layers up to a certain point, with the optimal tradeoff being usually achieved around 2--3 skipped layers. Notably, significant gains are already achieved by skipping a single, concept-relevant layer.

\begin{table}[h]
	\centering
	\setlength{\tabcolsep}{3.5pt}
	\renewcommand{\arraystretch}{1.02}

	\begin{minipage}[b]{0.465\linewidth}
		\centering
        \caption{\textbf{Performance-weighted aggregation is key for Concept Guidance.}
		We compare skipping all layers in the same pass (\textit{Naive}) equal weighting of multiple predictions (\textit{Uniform}), and performance-based weighting (CoG).}
		\label{tab:guidance_methods}

		\resizebox{\linewidth}{!}{%
			\begin{tabular}{llccc}
				\toprule
				\multirow{2}{*}{\textbf{Model}} & \multirow{2}{*}{\textbf{Metric}} &
				\multicolumn{3}{c}{\textbf{Guidance Method}} \\
				\cmidrule(lr){3-5}
				  & & \textbf{Naive} & \textbf{Uniform} & \textbf{CoG (ours)} \\
				\midrule
				\multirow{2}{*}{PixArt-$\alpha$}
				  & Hands ($\uparrow$)      & 0.481 & 0.482 & \textbf{0.509} \\
				  & Aesthetics ($\uparrow$) & 6.470 & 6.714 & \textbf{6.722} \\
				\midrule
				\multirow{3}{*}{SD3}
				  & Text ($\uparrow$)       & 0.453 & 0.477 & \textbf{0.490} \\
				  & Hands ($\uparrow$)      & 0.571 & 0.669 & \textbf{0.684} \\
				  & Aesthetics ($\uparrow$) & 6.009 & 6.338 & \textbf{6.348} \\
				\midrule
				\multirow{3}{*}{SD3.5}
				  & Text ($\uparrow$)       & 0.575 & 0.591 & \textbf{0.610} \\
				  & Hands ($\uparrow$)      & 0.664 & 0.677 & \textbf{0.679} \\
				  & Aesthetics ($\uparrow$) & 6.321 & \textbf{6.368} & 6.361 \\
				\midrule
				\multirow{3}{*}{FLUX.1-dev}
				  & Text ($\uparrow$)       & 0.483 & 0.478 & \textbf{0.510} \\
				  & Hands ($\uparrow$)      & 0.159 & 0.666 & \textbf{0.675} \\
				  & Aesthetics ($\uparrow$) & 6.628 & 6.607 & \textbf{6.657} \\
				\bottomrule
			\end{tabular}
		}
	\end{minipage}\hfill
	\begin{minipage}[b]{0.515\linewidth}
		\centering
        \caption{\textbf{Increasing $k$ trades off gains vs.\ interference, with a clear sweet spot.}
		We sweep $k$ (top-$k$ improving layers); settings with fewer than five improving layers omit larger-$k$ results. Across architectures, the optimum is approximately $k=3$.}
		\label{tab:num_layers_ablation}

		\resizebox{\linewidth}{!}{%
			\begin{tabular}{llccccc}
				\toprule
				\multirow{2}{*}{\textbf{Model}} & \multirow{2}{*}{\textbf{Metric}} &
				\multicolumn{5}{c}{\textbf{Number of Layers}} \\
				\cmidrule(lr){3-7}
				  & & \textbf{1} & \textbf{2} & \textbf{3} & \textbf{4} & \textbf{5} \\
				\midrule
				\multirow{2}{*}{PixArt-$\alpha$$^\dagger$}
				  & Hands ($\uparrow$)      & 0.473 & 0.484 & 0.502 & \textbf{0.535} & 0.501 \\
				  & Aesthetics ($\uparrow$) & 6.680 & 6.709 & \textbf{6.729} & - & - \\
				\midrule
				\multirow{3}{*}{SD3}
				  & Text ($\uparrow$)       & 0.428 & 0.445 & \textbf{0.455} & 0.444 & 0.452 \\
				  & Hands ($\uparrow$)      & 0.632 & 0.666 & 0.653 & 0.675 & \textbf{0.704} \\
				  & Aesthetics ($\uparrow$) & 6.194 & 6.257 & \textbf{6.259} & 6.191 & 6.169 \\
				\midrule
				\multirow{3}{*}{SD3.5}
				  & Text ($\uparrow$)       & 0.536 & \textbf{0.582} & 0.554 & 0.539 & 0.548 \\
				  & Hands ($\uparrow$)      & 0.627 & \textbf{0.692} & 0.684 & 0.649 & 0.665 \\
				  & Aesthetics ($\uparrow$) & \textbf{6.376} & 6.295 & 6.309 & 6.291 & 6.295 \\
				\midrule
				\multirow{3}{*}{FLUX.1-dev$^\dagger$}
				  & Text ($\uparrow$)       & 0.481 & 0.478 & \textbf{0.509} & 0.491 & 0.487 \\
				  & Hands ($\uparrow$)      & 0.657 & 0.667 & \textbf{0.670} & 0.655 & 0.641 \\
				  & Aesthetics ($\uparrow$) & 6.652 & 6.654 & \textbf{6.662} & - & - \\
				\bottomrule
			\end{tabular}
		}
	\end{minipage}
\end{table}

\paragraph{Multiple Layers and Contribution Weighting} To demonstrate the benefit of computing weighted averages of per-layer noise predictions, we compare our method against two additional baselines. For \textit{Uniform} skip-guidance, separate noise predictions are computed per layer but are not weighted based on performance. In \textit{Naive} skip-guidance, multiple layers are skipped within the same forward pass. Notably, \textit{Naive} corresponds to the current implementation in the HuggingFace SD3 pipeline~\cite{huggingfaceSD3}. We hypothesize that \textit{Naive} leads to degraded images due to strong manifold distortions induced by skipping several layers simultaneously. Moreover, we expect that without weighting, models underperform due to the lack of directional control on the noise manifold. Our experiments provide quantitative (\Cref{tab:guidance_methods}) and qualitative (\Cref{fig:multiskip_ablation}) evidence that separate noise predictions are necessary and that contribution based weighting improves performance. Notably, our findings indicate that popular diffusion pipelines could be significantly improved by incorporating our method of computing separate predictions and combining them through weighted aggregation.

\begin{figure}[h]
    \centering
    \begin{minipage}[t]{0.48\linewidth}
        \centering
        \includegraphics[width=\linewidth]{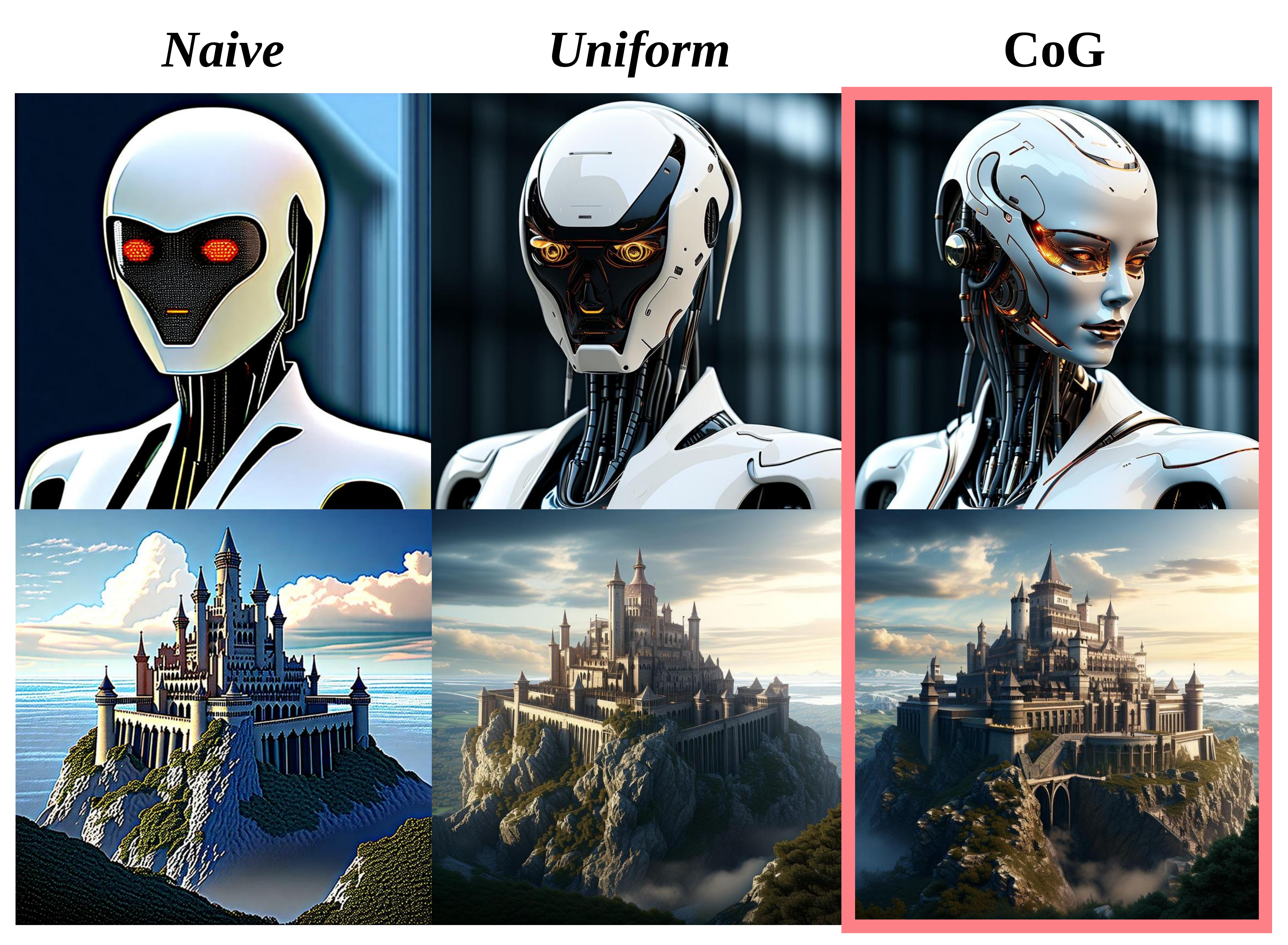}
        \caption{\textbf{Guidance Method Ablation.}}
        \label{fig:multiskip_ablation}
    \end{minipage}
    \begin{minipage}[t]{0.48\linewidth}
        \centering
        \includegraphics[width=\linewidth]{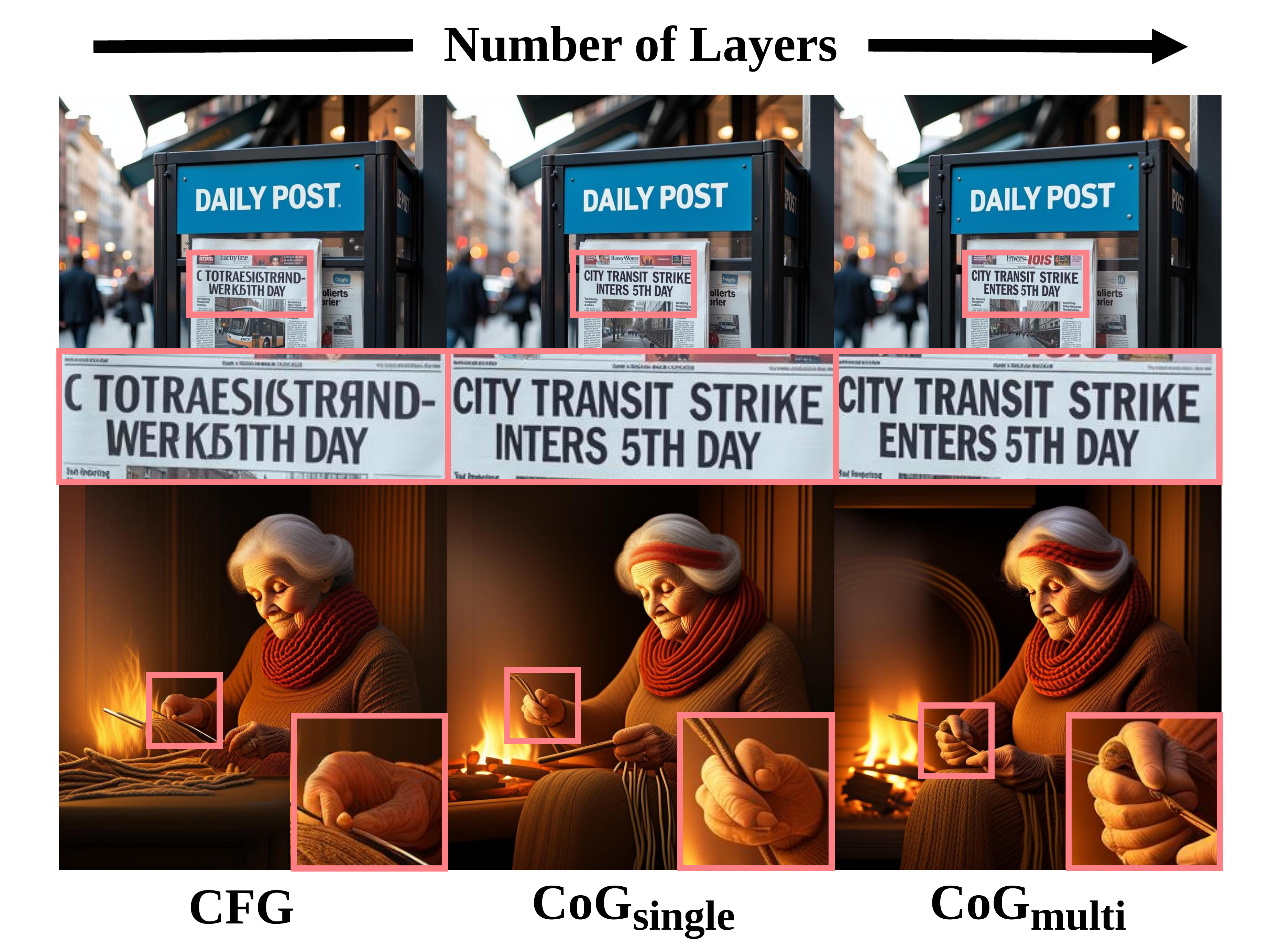}
        \caption{\textbf{Number of Layers Ablation.}}
        \label{fig:numlayers_ablation}
    \end{minipage}\hfill
    
\end{figure}

\section{Implementation Details}
\label{sec:impl_details}

\subsection{Code Availability}
Our implementation of Concept Guidance, together with the layer profiling code and the per-model, per-concept layer configurations reported in \Cref{tab:combined_layers}, is publicly available at \url{https://github.com/CompVis/concept\_guidance}.

\subsection{Computational Resources} For all experiments, models, and tasks, we use nodes of four NVIDIA A100 GPUs with 80 GBs of VRAM.

\subsection{Inference and Profiling Cost}
\label{sec:cost}
Concept Guidance introduces a one-time, offline profiling stage and a small online inference overhead. For profiling, we use $N=100$ prompts per concept. For example, for SD3 with $L=24$ layers this amounts to $N \times L = 2400$ generations and completes in $\sim 4$ hours on a single A100. Profiling is performed only once per model/concept pair (\Cref{alg:msg_profiling,alg:msg_inference}), and we release the resulting layer configurations (\Cref{tab:combined_layers}) so that CoG can be applied directly, without any profiling, for the models and concepts we study.

At inference, the cost scales linearly with the number of skipped layers $k$, since each skipped layer requires one additional noise prediction (\Cref{tab:compute_time}). The additional cost per skipped layer ranges from $0.4$\,s for PixArt-$\alpha$ to $12.4$\,s for FLUX.1-dev, i.e.\ it grows with model size; relative to a single CFG pass this corresponds to roughly $10\%$ for PixArt-$\alpha$ and up to nearly a full additional pass for FLUX.1-dev. As shown in \Cref{tab:main_results}, significant gains are already obtained with a single skipped layer ($\text{CoG}_{single}$), so the overhead is opt-in and can be tuned to the desired level of concept control.

\begin{table}[b]
    \centering
    \caption{\textbf{Inference time (s) on an A100 GPU.}}
   \begin{tabular}{l *{4}{>{\centering\arraybackslash}p{2.2 cm}}}
    \toprule
        \textbf{Model} & \textbf{CFG} \cite{ho2022classifier_free_guidance} & \textbf{CoG}$_{\mathbf{single}}$ (\textit{k}=1) & \textbf{CoG}$_{\mathbf{multi}}$ (\textit{k}=2) & \textbf{+Sec / Layer} \\
        \midrule
        PixArt-$\alpha$      & 3.5 & 3.9 & 4.3 & 0.4 \\
        Stable Diffusion 3   & 4.4 & 6.2 & 8   & 1.8 \\
        Stable Diffusion 3.5 & 5.9 & 7.7 & 9.5 & 1.8 \\
        FLUX.1-dev           & 12.9 & 25.3 & 37.7 & 12.4 \\
        \bottomrule
    \end{tabular}
    \label{tab:compute_time}
\end{table}

\subsection{Guidance Scales}
\paragraph{Classifier-Free Guidance} For CFG \cite{ho2022classifier_free_guidance}, we use the standard guidance scale per model. For Flux.1-dev \cite{huggingfaceFLUX}, the default guidance scale is $3.5$, for PixArt-$\alpha$ \cite{chen2023pixart}, it is $4.5$, and for both Stable Diffusion 3 and 3.5 \cite{huggingfaceSD3,huggingfaceSD35large}, the default scale is $7.0$.

\paragraph{Concept Guidance} To measure individual layer performance with residual skipping, we use a guidance scale of $2.0$, as used in single-skip residual STG \cite{hyung2025spatiotemporal}. For CoG, we sweep over guidance scales ranging from $1.25$ to $3.00$ in $0.25$ steps, and find that a guidance scale between $2.0$ and $2.5$ generally works best to achieve the results reported in \Cref{tab:main_results}. Full results are reported in \Cref{tab:scale_ablation}.

\begin{table}[h]
\caption{\textbf{CoG Guidance Scale Sweep.} For most models and tasks, the optimal guidance scale falls in the range between $2.0$ and $2.5$. $^{\dagger}$PixArt-$\alpha$ does not generate any visible text.}
\centering
\setlength{\aboverulesep}{0pt}
\setlength{\belowrulesep}{0pt}
\renewcommand{\arraystretch}{1.25}

\begin{tabular}{llccc>{\columncolor{blue!5}}c>{\columncolor{blue!5}}c>{\columncolor{blue!5}}ccc}
\toprule
\multirow{2}{*}{\textbf{Model}} & \multirow{2}{*}{\textbf{Metric}} &
\multicolumn{8}{c}{\textbf{Guidance Scale ($\lambda$)}} \\
\cmidrule(lr){3-10}
& & \textbf{1.25} & \textbf{1.50} & \textbf{1.75} & \textbf{2.00} & \textbf{2.25} & \textbf{2.50} & \textbf{2.75} & \textbf{3.00} \\
\midrule
\multirow{3}{*}{PixArt-$\alpha$$^{\dagger}$}
& Text ($\uparrow$)        & - & - & - & - & - & - & - & - \\
& Hands ($\uparrow$)       & 0.396 & 0.373 & 0.418 & \textbf{0.504} & 0.503 & 0.496 & 0.492 & 0.468 \\
& Aesthetics ($\uparrow$)  & 6.647 & 6.678 & 6.695 & 6.723 & 6.716 & \textbf{6.759} & 6.740 & 6.734 \\
\midrule
\multirow{3}{*}{SD3}
& Text ($\uparrow$)        & 0.482 & 0.472 & 0.486 & 0.489 & \textbf{0.506} & 0.481 & 0.480 & 0.474 \\
& Hands ($\uparrow$)       & 0.602 & 0.631 & 0.660 & \textbf{0.670} & 0.640 & 0.662 & 0.605 & 0.629 \\
& Aesthetics ($\uparrow$)  & 6.169 & 6.198 & 6.250 & 6.248 & 6.257 & \textbf{6.260} & 6.217 & 6.241 \\
\midrule
\multirow{3}{*}{SD3.5}
& Text ($\uparrow$)        & 0.561 & 0.569 & 0.581 & 0.600 & 0.595 & \textbf{0.606} & 0.602 & 0.598 \\
& Hands ($\uparrow$)       & 0.603 & 0.628 & 0.636 & 0.634 & \textbf{0.653} & 0.633 & 0.645 & 0.628 \\
& Aesthetics ($\uparrow$)  & 6.324 & 6.337 & 6.349 & \textbf{6.419} & 6.388 & 6.378 & 6.351 & 6.356 \\
\midrule
\multirow{3}{*}{FLUX.1-dev}
& Text ($\uparrow$)        & 0.502 & 0.509 & 0.521 & 0.515 & \textbf{0.524} & 0.521 & 0.515 & 0.503 \\
& Hands ($\uparrow$)       & 0.642 & 0.666 & \textbf{0.695} & 0.617 & 0.617 & 0.620 & 0.635 & 0.638 \\
& Aesthetics ($\uparrow$)  & 6.663 & 6.711 & 6.766 & 6.753 & 6.789 & 6.779 & 6.785 & \textbf{6.815} \\
\bottomrule
\end{tabular}

\label{tab:scale_ablation}
\end{table}

\subsection{Conditional Prompts}
To ensure optimal diversity of conditional prompts, we generate prompts with different large language models. Specifically, we encourage LLMs to cover a wide range of prompts with respect to length, complexity and general theme. For instance, in text generation, we make sure that prompts contain a varying count of text instances, and that text within these instances has varying complexity. For text specifically, we find that strong models like FLUX.1-dev seldom struggle with prompts containing very simple text, thus we slightly adjust the complexity to model strength for evaluation. We provide sample prompts from our datasets for text (\Cref{fig:textprompts}), hands (\Cref{fig:handsprompts}) and aesthetics (\Cref{fig:aestheticsprompts}).
\newpage
\begin{figure}[H]
    \centering
    \includegraphics[width=0.63\linewidth]{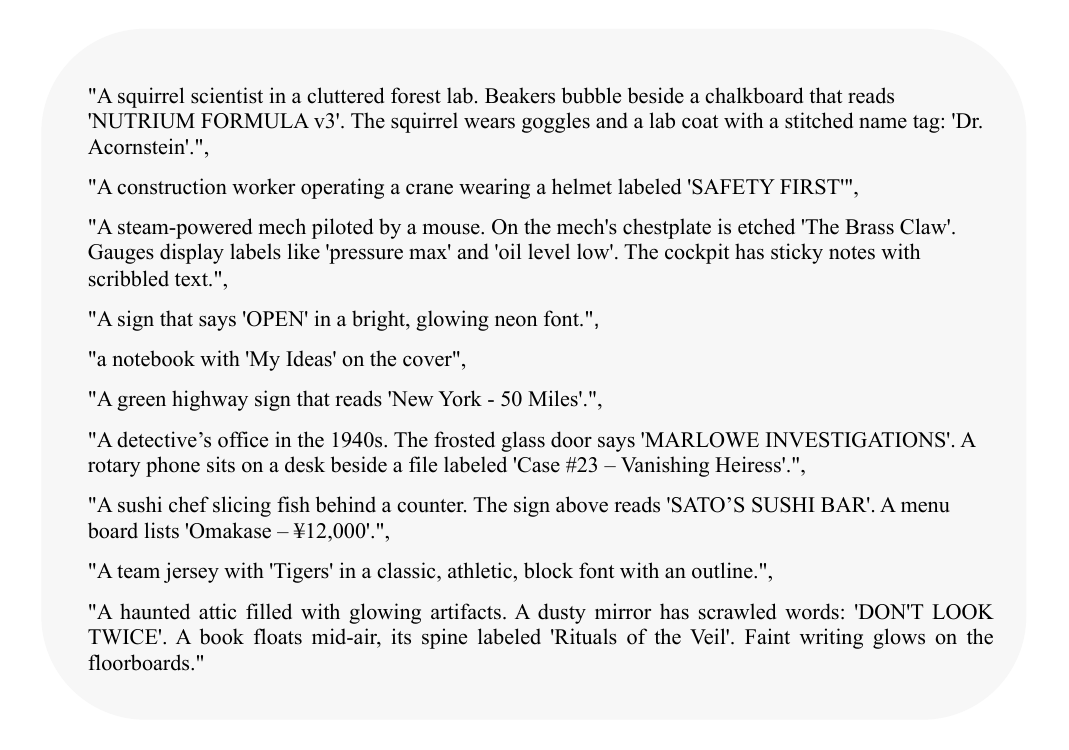}
    \caption{\textbf{Prompt Samples for Text Generation.}}
    \label{fig:textprompts}
\end{figure}

\begin{figure}[H]
    \centering
    \includegraphics[width=0.63\linewidth]{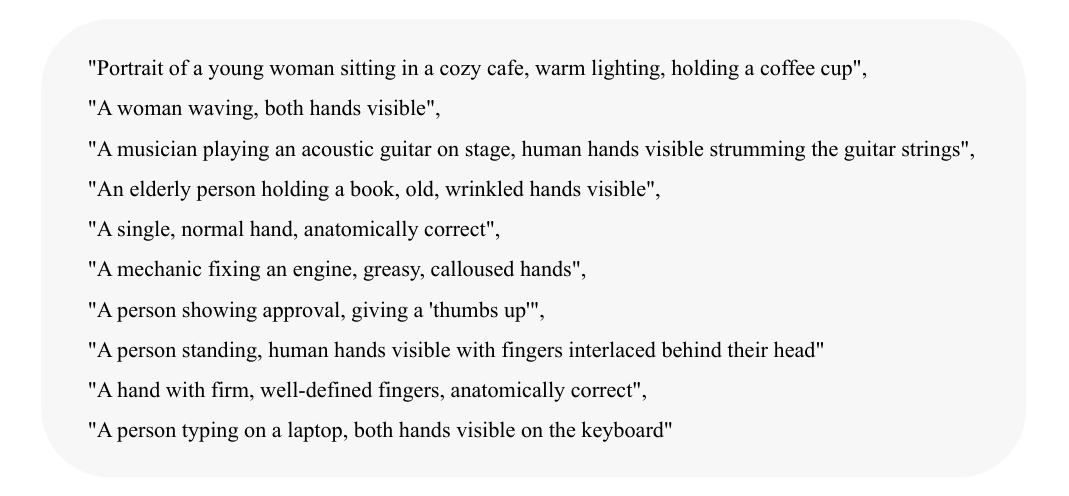}
    \caption{\textbf{Prompt Samples for Hand Generation.}}
    \label{fig:handsprompts}
\end{figure}

\begin{figure}[H]
    \centering
    \includegraphics[width=0.63\linewidth]{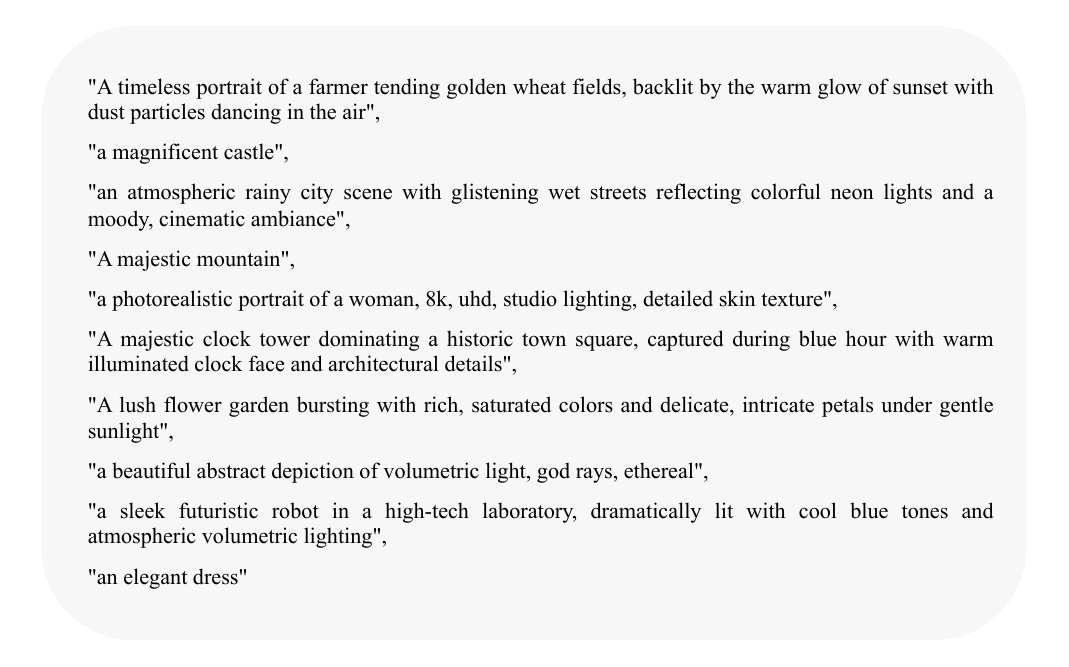}
    \caption{\textbf{Prompt Samples for Aesthetics.}}
    \label{fig:aestheticsprompts}
\end{figure}
\newpage

\subsection{Algorithms}

We implement CoG according to \Cref{alg:msg_inference} and layer profiling according to \Cref{alg:msg_profiling}. Notably, once layers have been found using layer profiling, CoG can be added into existing pipelines by exchanging the layer-wise forward with a conditional forward that returns the input if for the current layer $i$, $i \in K$, according to \Cref{eq:skip}, and implementing guidance with the weighted negative noise prediction according to \Cref{eq:msg,eq:msg_guidance}.

\noindent
\begin{center}
        \begin{minipage}{\linewidth}
            \begin{algorithm}[H]
                \caption{Concept Guidance (CoG) Step}
                \label{alg:msg_inference}
                \begin{algorithmic}[1]
                \Require $x_t, t, c, \mathcal{K}, \Omega, \text{CoG Scale } \lambda$, CFG Scale $\gamma$
                \State $\epsilon_{unc}, \epsilon_{cond} \gets \epsilon_\theta(x_t, t, \emptyset), \epsilon_\theta(x_t, t, c)$
                \State $\epsilon_{\theta} \gets \epsilon_{unc} + \gamma (\epsilon_{cond} - \epsilon_{unc})$ \Comment{Standard CFG}
    
                \State $\epsilon_{sum} \gets 0, W \gets 0$ 
                \For{$i \in \mathcal{K}$} \Comment{Aggregate Skipped Predictions}
                    \State $\epsilon_{skip} \gets \epsilon_{[\theta\setminus i]}(x_t, t, c)$
                    \State $\epsilon_{sum} \gets \epsilon_{sum} + \omega_i \cdot \epsilon_{skip}$
                    \State $W \gets W + \omega_i$
                \EndFor
                \State $\epsilon_{neg} \gets \epsilon_{sum} / W$
    
                \State \Return $(1 - \lambda)\epsilon_{neg} + \lambda \epsilon_{\theta}$ \Comment{Apply CoG}
                \end{algorithmic}
            \end{algorithm}
        \end{minipage}%
    
\end{center}

\noindent
\begin{center}
        \begin{minipage}{\linewidth} % Render internally at full width
            \begin{algorithm}[H]
                \caption{CoG Layer Profiling}
                \label{alg:msg_profiling}
                \begin{algorithmic}[1]
                \Require Model $\theta$, Prompts $\mathcal{C}$, Metric $\Phi(\cdot)$, Top-$k$ count
                \For{$c \in \mathcal{C}$} \Comment{Compute Baseline Performance}
                    \State $x_0 \gets \text{Sample}(\theta, c)$ 
                    \State $p_{\emptyset} \gets \Phi(x_0)$
                \EndFor
                \State $p_{\emptyset} \gets p_{\emptyset} / |\mathcal{C}|$
    
                \State $\mathcal{L} \gets [\text{ }]$ \Comment{Initialize Impact List}
                \For{layer $i \in \{1, \dots, L\}$} \Comment{Compute Layer Impact}
                    \For{$c \in \mathcal{C}$}
                        \State $x_0 \gets \text{Sample}(\theta_{\setminus i}, c)$ 
                        \State $p_i \gets \Phi(x_0)$
                    \EndFor
                    \State $p_i \gets p_i / |\mathcal{C}|$
                    \State $\omega_i \gets \max(0, p_i - p_{\emptyset})$ 
                    \State Append $(i, \omega_i)$ to $\mathcal{L}$
                \EndFor
    
                \State Sort $\mathcal{L}$ by $\omega_i$ descending \Comment{Select Top Layers}
                \State $\mathcal{K} \gets \{i \mid (i, \omega_i) \in \mathcal{L}[:k]\}$
                \State $\Omega \gets \{\omega_i \mid i \in \mathcal{K}\}$
                \State \Return $\mathcal{K}, \Omega$
                \end{algorithmic}
            \end{algorithm}
        \end{minipage}%
\end{center}

\section{Uncurated Samples}
\label{sec:uncurated}
Complementing the qualitative results in \Cref{fig:aesthetics,fig:texthands}, we show uncurated samples for the typical failure case of text generation and for the general task of generating more aesthetic images. For both tasks, we choose random seeds and generate images with Classifier-Free Guidance \cite{ho2022classifier_free_guidance}, Naive Skip-Guidance (as found in SD3 \cite{huggingfaceSD3}), Spatio-Temporal Skip-Guidance \cite{hyung2025spatiotemporal}, and Concept Guidance, using the same seed for each method. Uncurated samples generated with Flux.1-dev are shown in \Cref{fig:text_uncurated} for text, and in \Cref{fig:aesth_uncurated} for aesthetics.

\begin{figure}[h]
    \centering
    \includegraphics[width=0.85\linewidth]{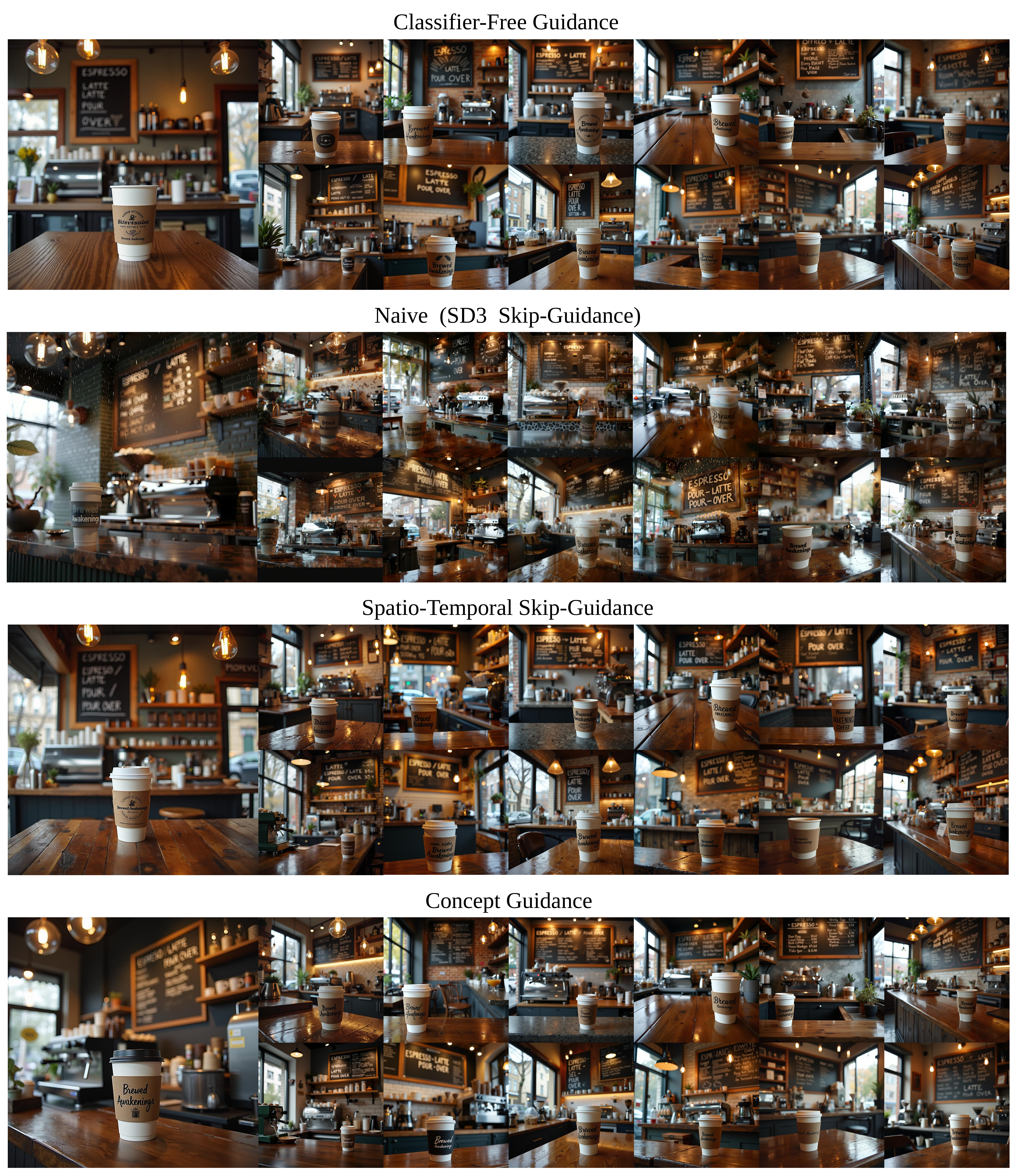}
    \caption{Uncurated Comparison of Classifier-Free Guidance (CFG) \cite{ho2022classifier_free_guidance}, Naive Skip-Guidance as found in Stable Diffusion 3 \cite{huggingfaceSD3}, Spatio-Temporal Skip Guidance \cite{hyung2025spatiotemporal} and Concept Guidance for Text Generation (FLUX.1-dev).}    
    \label{fig:text_uncurated}
\end{figure}

\begin{figure}[h]
    \centering
    \includegraphics[width=0.85\linewidth]{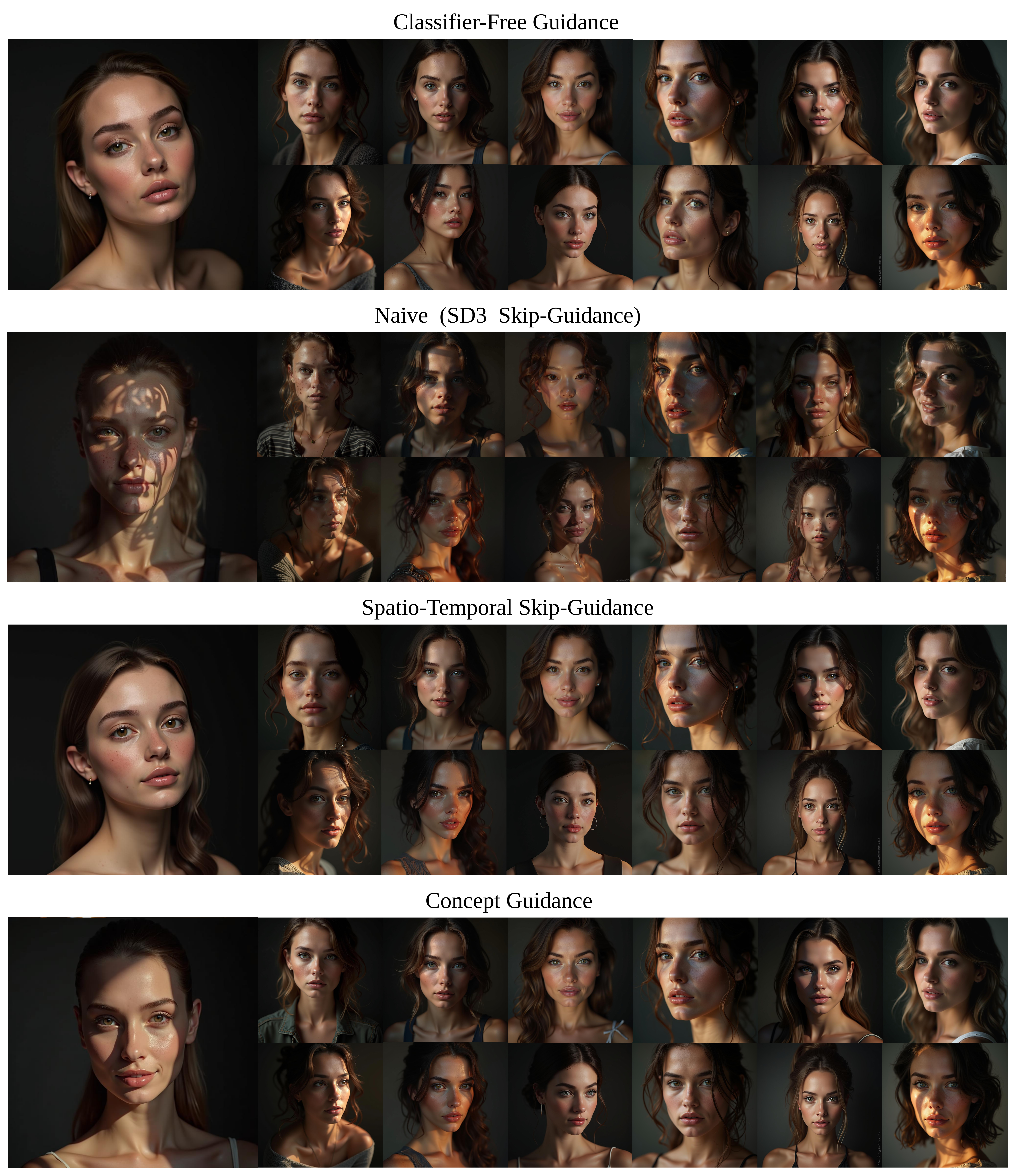}
    \caption{Uncurated Comparison of Classifier-Free Guidance (CFG) \cite{ho2022classifier_free_guidance}, Naive Skip-Guidance as found in Stable Diffusion 3 \cite{huggingfaceSD3}, Spatio-Temporal Skip Guidance \cite{hyung2025spatiotemporal} and Concept Guidance for Aesthetics (FLUX.1-dev).}    
    \label{fig:aesth_uncurated}
\end{figure}

\end{document}